\documentclass{article}

\usepackage[letterpaper, hmargin=2.5cm, vmargin=2.5cm, bmargin=2.5cm]{geometry}

\usepackage{fancyhdr}
\newcommand{\shorttitle}{Domain-informed Adaptive Sampling for Generalizable PINNs in Metal AM via Conditional Flow Matching}

\usepackage[utf8]{inputenc}
\usepackage[T1]{fontenc}
\usepackage[section]{placeins}

\usepackage{amsfonts}
\usepackage{amsmath}
\usepackage{amssymb}
\usepackage{amsthm}
\usepackage{nicefrac}
\usepackage{bm}

\usepackage{graphicx}
\usepackage{xcolor}
\usepackage{tcolorbox}
\tcbuselibrary{breakable}
\usepackage{subcaption}
\usepackage{svg}
\usepackage[table]{xcolor}

\usepackage{booktabs}
\usepackage{multirow}

\usepackage{natbib}
\usepackage{hyperref}
\usepackage{url}
\hypersetup{
    colorlinks=true,
    linkcolor=blue!50!black,
    citecolor=blue!50!black,
    urlcolor=blue!50!black,
    pdftitle={Domain-informed Adaptive Sampling for Generalizable PINNs in Metal AM via Conditional Flow Matching},
    pdfsubject={cs.LG, physics.comp-ph},
    pdfauthor={Hyeonsu Lee, Jihoon Jeong},
    pdfkeywords={PINN, adaptive sampling, flow matching, additive manufacturing}
}

\usepackage{enumitem}
\usepackage{algorithm}
\usepackage{algorithmicx}
\usepackage{algpseudocode}

\renewenvironment{abstract}{%
  \centerline{\large\bfseries Abstract}\vspace{0.5em}%
  \begin{quote}%
}{%
  \end{quote}%
}

\newtheorem{assumption}{Assumption}[section]
\newtheorem{theorem}{Theorem}[section]
\newtheorem{remark}{Remark}[section]
\newtheorem{proposition}{Proposition}[section]

\newtheorem{lemma}{Lemma}[section]

\title{Domain-informed Adaptive Sampling for \\
       Generalizable
       PINNs in Metal Additive Manufacturing \\
       via Conditional Flow Matching}

\author{%
  Hyeonsu Lee,
  \hspace{1mm}Jihoon Jeong\thanks{Corresponding author. \texttt{jihoonjeong@tamu.edu}} \\
  Wm Michael Barnes '64 Department of Industrial and Systems Engineering\\
  Texas A\&M University\\
  College Station, TX 77843, USA\\
  \texttt{\{hslee, jihoonjeong\}@tamu.edu}
}                                                 

\date{}

\begin{document}
\maketitle
\begin{abstract}
Accurate thermal modeling is essential in metal additive manufacturing (AM) for understanding the process-structure-property chain. Physics-informed neural networks (PINNs) offer effective surrogate thermal modeling by minimizing physics-based residual losses at collocation points. However, prior works typically rely on manually-crafted, static collocation sampling strategies, which are neither principled nor scalable across process conditions, hindering their generalization capability. In this work, we provide theoretical analysis through empirical risk minimization, showing that process condition-aware adaptive sampling is strictly more favorable than conventional static sampling for generalization. Building on this insight, we propose an adaptive sampling strategy within a two-stage framework: (1) a conditional Flow Matching model that learns approximate high-residual distributions across different process conditions, and (2) a mixed sampling strategy combining this distribution with a domain-informed base distribution to generate adaptive collocation points for refining the PINN predictor. Experiments on metal AM numerical benchmarks demonstrate that our method consistently outperforms state-of-the-art PINN baselines, achieving an average 62.1\% reduction in relative $L_2$ error under an identical collocation budget, by capturing process-dependent heat dissipation regions often overlooked in the literature. To the authors' knowledge, this is the first adaptive sampling strategy for PINNs in metal AM, contributing to the enhanced generalization and broader applicability.
\end{abstract}

\section{Introduction}\label{sec:intro}

Metal additive manufacturing (AM) has received significant attention for its potential across diverse applications ranging from aerospace, automotive, and agriculture to construction and infrastructure \citep{frazier2014metal}. Due to the nature of metal AM processes, its primary advantage lies in its flexibility in realizing complex geometric variations and material compositions tailored to the unique requirements of each application \citep{tofail2018additive}. However, process instability and inconsistent part qualification are critical bottlenecks to its widespread industrial adoption \citep{svetlizky2021directed}.

To overcome these obstacles, accurate thermal field information is known to be essential in metal AM for understanding defect formation mechanisms and the process-structure-property chain \citep{smith2016linking,debroy2018additive}. As such, temperature field prediction under unseen process conditions enables effective process optimization, assuring product quality in metal AM \citep{ren2020thermal}.

Recently, Physics-Informed Neural Networks (PINNs) have been widely adopted as effective thermal surrogate models in metal AM. PINNs are trained by minimizing physics-based residuals at sampled \textit{collocation points} to satisfy governing PDEs, boundary conditions (BCs), and initial conditions (ICs), thereby yielding physically consistent, interpretable, and reliable predictions \citep{raissi2019physics}. Notably, PINNs enable mesh-free and data-efficient, or even data-free training \citep{zhu2021machine}, which has driven extensive research using PINNs for temperature field prediction in metal AM.

Previous progress in PINNs for metal AM can be broadly categorized into two perspectives: (1) improving performance by accounting for steep, highly transient thermal gradients in metal AM processes when training PINNs \citep{cooper2023error,peng2024multi,peng2025multi,yang2026physics}, and (2) enhancing generalization across unseen process conditions for rapid inference \citep{faegh2025physics,peng2025prediction,sajadi2025two}. Given that metal AM is especially known for its flexibility tailored to diverse applications through varying process conditions including material compositions, prior research has largely focused on advancing the generalizability of PINNs to align with the core flexibility that metal AM promises.

In this context, recent works have drawn significant attention to \textit{parametric PINNs} over conventional non-parametric PINN models, owing to their ability to amortize rapid inference across unseen process conditions. Unlike non-parametric PINNs, which often necessitate costly re-training or transfer learning-based adaptation for a single, specific target process condition \citep{liao2023hybrid}, parametric PINNs incorporate PDE parameters as additional inputs, enabling a single trained model to generalize across varying process conditions. For instance, \cite{hosseini2023single} developed a parametric PINN to generalize across different combinations of process parameters (e.g., laser power and scanning speed) and material properties within a limited range. \cite{yuan2025physics} proposed a parametric PINN for generalization to unseen process parameters alongside a transfer learning framework for rapid adaptation to different process conditions. Most recently, \cite{lee2026material} demonstrated that a decoupled parametric PINN architecture, along with physics-guided output scaling and a hybrid optimization strategy, achieves strong generalization performance across a wide range of material compositions including both in-distribution and challenging out-of-distribution (OOD) regimes. Their approach outperformed both conventional non-parametric PINNs and prior parametric PINNs, highlighting the potential of parametric PINNs as an advanced surrogate model for thermal modeling in metal AM.

These recent advances clearly show that parametric PINNs hold significant potential, not only in their capacity for rapid inference, but also in superior overall generalization performance across unseen process conditions. Nevertheless, to the best of the authors' knowledge, one critical limitation remains unaddressed in the literature:

\begin{quote}
Although \textit{collocation sampling strategies} are critical for both stable convergence and overall predictive performance in PINNs \citep{wang2022and}, previous research often relies on a single, manually-crafted collocation sampling strategy that is static and independent of process conditions \citep{liao2023hybrid}. While this strategy is informed by domain knowledge of metal AM and is shown to be adequate to some extent in terms of ensuring training stability and performance \citep{faegh2025physics}, it may be neither effective nor scalable, particularly for parametric PINNs, warranting further investigation.

Specifically, previous approaches statically concentrate collocation points in regions where steep and transient thermal gradients are expected (e.g., near the laser path), while remaining relatively coarse elsewhere. However, as evidenced by recent prior work \citep{lee2026material}, high-residual regions—where predictions deviate significantly from the physics-based solution—shift substantially depending on process conditions. For instance, materials with lower thermal conductivity localize intense heat near the laser path, creating sharply concentrated high-residual zones; conversely, higher thermal conductivity accelerates heat transport, causing high-residual regions to shift or expand toward broader heat dissipation zones. In this sense, the conventional static collocation sampling strategy that is independent of process conditions can be suboptimal and ultimately inhibits parametric PINNs from effectively generalizing across diverse process conditions.
\end{quote}

To this end, we introduce a domain-informed adaptive sampling for parametric PINNs in metal AM. Specifically, our main contributions are threefold:

\begin{enumerate}
\item \textbf{Theoretical Analysis on Generalization Error}: 
We provide a theoretical analysis of how collocation sampling impacts generalization across unseen process conditions within an empirical risk minimization framework. We demonstrate that PINN generalization error decomposes into optimization and statistical errors, where the latter is heavily governed by the variance of the loss estimator. In this regard, we prove that conventional static sampling strategies are strictly less effective in generalization than adaptive sampling strategies.
    
\item \textbf{Adaptive Sampling for PINNs in Metal AM using Conditional Flow Matching}: Grounded in our theoretical insights, we present an adaptive sampling framework for PINNs in metal AM using Flow Matching \citep{lipman2022flow}. By conditioning the generative model on both time steps and PDE parameters, our framework effectively approximates the process condition-dependent high-residual distribution and enables efficient generation of collocation points in a single pass. To the best of the authors' knowledge, this is the first adaptive sampling framework introduced for PINNs in metal AM.

\item \textbf{Integration of Metal AM Domain Knowledge}: We incorporate domain-informed physical priors both to warm-start the residual landscape that guides the generative model and to construct a base sampling distribution that is mixed with the learned proposal during training. This integration stabilizes and accelerates training for both the generator and the predictor.
\end{enumerate}

We validate the proposed method through extensive experiments on a widely used numerical benchmark, covering five representative metal alloys that span both in-distribution and OOD regimes. Under an identical collocation point budget, our method reduces the relative $L_2$ error of the current state-of-the-art parametric PINN baseline \citep{lee2026material} by up to $70.1\%$, purely by reallocating where collocation points are sampled.

Our qualitative analysis further shows that high-residual regions are not confined to the vicinity of the laser track, as commonly assumed in prior work, but also emerge in heat dissipation regions that evolve with each material's thermophysical properties. Capturing these process condition-dependent shifts, rather than sampling near the laser track alone, proves essential to accurately predicting subsurface and far-field thermal behavior. We additionally conduct an ablation and sensitivity analysis, confirming that integrating domain knowledge is critical to achieving both training stability and superior sample efficiency relative to an uninformative uniform base, which reflects the setting where the residual landscape is unknown a priori. We conclude with a discussion of the limitations of the present study and directions for future work.

In summary, this work contributes a theoretical foundation for collocation sampling in PINNs from a generalization standpoint, a practical adaptive sampling framework that scales across process conditions via amortized generative modeling, and an integration of domain knowledge that ensures stable and efficient training. Together, these contributions move parametric PINNs in metal AM closer to strong generalization and rapid inference across diverse process conditions.

\section{Proposed Method}\label{sec:method}

\subsection{Nomenclature}
We leave the nomenclature in Appendix~\ref{app:nomenclature}.

\subsection{Problem Setup}\label{sec:setup}

Let $\Omega_{x} \subset \mathbb{R}^{d_{x}}$ be the bounded spatial domain representing the part build, and $[0, t_{\mathrm{end}}] \subset \mathbb{R}$ be the time interval over which the AM process is simulated. The spatiotemporal coordinates thus lie in \( (x, t)\in \Omega := \Omega_{x} \times [0, t_{\mathrm{end}}]\), so that $\Omega \subset \mathbb{R}^{d}$ with $d = d_{x} + 1$. Let $\mathcal{P} \subset \mathbb{R}^m$ be a compact set of PDE parameters $\mu \in \mathcal{P}$ within known bounds. In general, $\mu$ can represent any process condition, including, for instance, process parameters (e.g., laser power, scanning speed) and thermophysical properties of the feedstock material (e.g., density, heat capacity, thermal conductivity). Note that while our theoretical analysis applies to general process conditions, our experiments focus specifically on variations in material properties for benchmarking against previous works.

Given the governing PDE defined over $\Omega\times\mathcal{P}$, parametric PINNs employ a deep neural network $f_\theta:\Omega\times\mathcal{P}\to\mathbb{R}$ with learnable parameters $\theta$ to predict the temperature field $T_\theta(x, t; \mu)\in\mathbb{R}$. Training is performed by minimizing a loss function based on the physics-based residuals. Here, the PDE residual is given by \(r_{\theta,\mathrm{pde}}(x,t;\mu) = \mathcal{N}[T_\theta](x,t;\mu)\), 
where $\mathcal{N}[\cdot]$ is the governing nonlinear differential operator. Combined with the corresponding BC and IC residuals, $r_{\theta,\mathrm{bc}}$ and $r_{\theta,\mathrm{ic}}$, alongside their balancing hyperparameters $\lambda_{\mathrm{pde}}, \lambda_{\mathrm{bc}}$, and $\lambda_{\mathrm{ic}}$, the total squared residual is formulated as:
\begin{equation}\label{eq:total_residual}
r_\theta(x,t;\mu)^2 := \lambda_{\mathrm{pde}} r_{\theta,\mathrm{pde}}^2 + \lambda_{\mathrm{bc}} r_{\theta,\mathrm{bc}}^2 + \lambda_{\mathrm{ic}} r_{\theta,\mathrm{ic}}^2
\end{equation}
In this study, the governing PDE corresponds to the 3D transient heat diffusion equation with Fourier conduction, coupled with convective, radiative, and Gaussian laser-flux BCs, and an ambient temperature IC (the detailed formulations are in Appendix~\ref{app:governing_eqs}).

Our goal is to achieve generalization across the entire spatiotemporal domain and PDE parameter space $\Omega\times\mathcal{P}$. We formalize this objective within the standard empirical risk minimization framework by defining two key measures: the population loss $\mathcal{L}(\theta)$ and the empirical loss $\widehat{\mathcal{L}}_N(\theta;q)$. First, the population loss $\mathcal{L}(\theta)$ over $\Omega\times\mathcal{P}$ is defined as:
\begin{equation}\label{eq:Lpop}
\mathcal{L}(\theta)
:= \int_{\Omega\times\mathcal{P}}
r_\theta(x,t;\mu)^2\,
p(x,t,\mu)\,dx\,dt\,d\mu
\end{equation}
where $p(x,t,\mu)$ is assumed to be the uniform distribution over $\Omega\times\mathcal{P}$. However, in practice, $\mathcal{L}(\theta)$ cannot be feasibly computed due to high-dimensional integration. Training therefore relies on Monte Carlo estimation with collocation points $\{(x_i, t_i,\mu_i)\}_{i=1}^N$ drawn i.i.d. from a proposal density $q(x,t,\mu)$. Here, note that we assume both time $t$ and PDE parameters $\mu$ are independently drawn from fixed uniform distributions across both training and test evaluations. Consequently, the joint proposal density factorizes as $q(x, t, \mu) = q(x \mid t, \mu) \, p(t) \, p(\mu)$. Using importance sampling, the empirical loss estimator $\widehat{\mathcal{L}}_N(\theta;q)$ is then constructed as:
\begin{equation}\label{eq:Lemp_estimator}
\widehat{\mathcal{L}}_N(\theta;q) := 
\frac{1}{N}\sum_{i=1}^N \frac{p(x_i,t_i,\mu_i)}{q(x_i,t_i,\mu_i)}\, r_\theta(x_i,t_i;\mu_i)^2 
= \frac{1}{N}\sum_{i=1}^N \frac{p(x_i)}{q(x_i \mid t_i, \mu_i)}\, r_\theta(x_i,t_i;\mu_i)^2
\end{equation}

Throughout this study, we denote $q$ as the general spatial proposal. This includes the conventional static baseline proposals $q(x\mid t)$, which depend on $t$ to follow the time-dependent laser track but remain independent of $\mu$, and our proposed adaptive proposal $q_\theta(x\mid t, \mu)$, which dynamically adapts to both $t$ and $\mu$ for a given $\theta$. Since the target distributions for $t$ and $\mu$ are assumed to be fixed uniform across both training and test settings, we aim to design the adaptive spatial conditional distribution $q_\theta(x\mid t, \mu)$ to minimize the population loss $\mathcal{L}(\theta)$ and thereby achieve effective generalization across unseen process conditions.

\subsection{Theoretical analysis of collocation sampling and generalization}\label{sec:errordecomp}
In this subsection, we provide a theoretical analysis of how the choice of proposal $q$ affects the excess generalization error. We first show that this error decomposes into an optimization error and a statistical error, with the latter bounded by the variance of the empirical loss estimator $\widehat{\mathcal{L}}_N(\theta;q)$.
To lower this bound and thereby minimize the generalization error, we prove that the adaptive proposal $q_\theta(\cdot\mid t, \mu)$ is strictly superior to conventional $\mu$-independent proposals $q(\cdot\mid t)$. Finally, we show that the optimal adaptive proposal is proportional to the squared residual. Figure~\ref{fig:section2_2} provides a visual overview of these theoretical results.

\begin{figure}[th]
    \centering
    \includegraphics[width=1\textwidth]{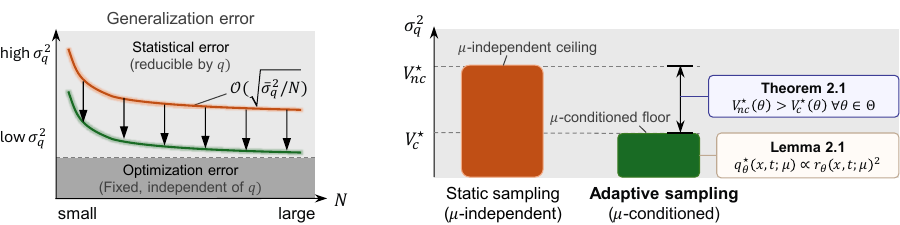} 
\caption{Visual overview of Section~\ref{sec:errordecomp}.
\textbf{Left:} Proposition~\ref{prop:genbound} decomposes the generalization error into an optimization error, which is independent of the proposal $q$, and a statistical error that scales as $\mathcal{O}(\sqrt{\bar{\sigma}^2_q / N})$. This implies that reducing the variance of the empirical loss estimator, $\bar{\sigma}^2_q$, is a more effective lever for minimizing the statistical error, and thereby the overall generalization error, than merely increasing the total sample size $N$.
\textbf{Right:} To reduce $\bar{\sigma}^2_q$, Theorem~\ref{thm:strict-sep} states that conventional $\mu$-independent static sampling is strictly inferior to adaptive sampling, as its infimum variance $V^\star_{\mathrm{nc}}(\theta)$ imposes a hard ceiling, whereas the $\mu$-conditioned adaptive proposal strictly lowers this variance floor. Lemma~\ref{lem:optimal-q} further characterizes the optimal adaptive proposal as being proportional to the squared residual.
}
\label{fig:section2_2}
\end{figure}

Let $\theta^\star \in \arg\min_{\theta \in \Theta} \mathcal{L}(\theta)$ be the population loss minimizer over the hypothesis class $\Theta$, $\widehat\theta_N(q) \in \arg\min_{\theta \in \Theta} \widehat{\mathcal{L}}_N(\theta;q)$ the empirical loss minimizer under proposal $q$, and $\widetilde\theta_N$ the parameter returned after training.
The excess generalization error, the deviation of the population loss of the trained parameter $\mathcal{L}(\widetilde\theta_N)$ from the minimal population loss $\mathcal{L}(\theta^\star)$, admits the standard decomposition \citep{shalev2014understanding}:
\begin{equation}\label{eq:uniform-dev}
\underbrace{\mathcal{L}(\widetilde\theta_N) - \mathcal{L}(\theta^\star)}_{\text{excess generalization error}} \;\leq\; \underbrace{\big[\widehat{\mathcal{L}}_N(\widetilde\theta_N; q) - \widehat{\mathcal{L}}_N(\widehat\theta_N(q); q)\big]}_{\text{optimization error}} \;+\; 2\,\sup_{\theta \in \Theta} \underbrace{\big|\widehat{\mathcal{L}}_N(\theta; q) - \mathcal{L}(\theta)\big|}_{\text{statistical error}}
\end{equation}
The detailed derivation of this decomposition is provided in Appendix~\ref{prf:excgen}. In other words, the excess generalization error is comprised of an optimization error, which is controlled by optimization techniques (e.g., model architecture, optimizer) and, more importantly, does not depend on the proposal $q$, and a statistical error, which depends directly on $q$ and the sample size of collocation points $N$. Thus, reducing this statistical error toward zero is key to achieve true generalization. The statistical error in turn admits the following variance-dependent generalization bound:
\begin{tcolorbox}[colback=teal!5,colframe=teal!50!black,boxrule=0.7pt,arc=2pt,left=6pt,right=6pt,top=4pt,bottom=4pt,breakable]
\begin{proposition}[Variance-dependent generalization bound]\label{prop:genbound}
With probability at least $1 - \delta$, the supremum of the statistical error is bounded as:
\begin{equation}\label{eq:Bernstein}
\sup_{\theta \in \Theta}\underbrace{
\big|\widehat{\mathcal{L}}_N(\theta;q) - \mathcal{L}(\theta)\big|}_{\text{statistical error}}
\;\leq\; C_1\,\sqrt{\frac{\bar\sigma_q^2 \,\mathcal{C}(\Theta,\delta)}{N}}
\;+\; C_2\frac{M_qR^2 \,\mathcal{C}(\Theta,\delta)}{N}
\end{equation}
where $\bar\sigma_q^2 := \sup_{\theta \in \Theta} \sigma_q^2(\theta)$ with $\sigma_q^2(\theta) := N\,\mathrm{Var}_q\big[\widehat{\mathcal{L}}_N(\theta;q)\big]$ denoting the per-sample variance of $\widehat{\mathcal{L}}_N(\theta;q)$.
\end{proposition}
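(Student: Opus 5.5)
The bound in Proposition~\ref{prop:genbound} is a uniform Bernstein-type inequality over the hypothesis class. The plan is to prove a pointwise Bernstein bound for a fixed $\theta$ and then pass to the supremum over $\Theta$ by a covering (or chaining) argument. The term $\mathcal{C}(\Theta,\delta)$ will absorb the complexity of $\Theta$ together with $\log(1/\delta)$.

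Two boundedness assumptions are needed, and I would state them explicitly. First, the squared residual is uniformly bounded, $\sup_{\theta\in\Theta}\sup_{(x,t,\mu)} r_\theta(x,t;\mu)^2\le R^2$. This is reasonable since $\Omega\times\mathcal{P}$ is compact and the network outputs and derivatives are bounded on a bounded parameter set. Second, the importance weight is bounded, $M_q:=\sup p(x)/q(x\mid t,\mu)<\infty$, so $q$ must cover the support of $p$.

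\textbf{Pointwise step.} Fix $\theta$ and define $Z_i := \frac{p(x_i)}{q(x_i\mid t_i,\mu_i)}\, r_\theta(x_i,t_i;\mu_i)^2$. These are i.i.d. under $q$ with $\mathbb{E}_q[Z_i]=\mathcal{L}(\theta)$ by the importance-sampling identity, so $\widehat{\mathcal{L}}_N(\theta;q)$ is unbiased. Each $Z_i$ satisfies $0\le Z_i\le M_qR^2$, and its variance is $\mathrm{Var}_q(Z_i)=\sigma_q^2(\theta)=N\,\mathrm{Var}_q[\widehat{\mathcal{L}}_N]\le\bar\sigma_q^2$. Bernstein's inequality then gives, with probability at least $1-\delta'$,
\begin{equation*}
\big|\widehat{\mathcal{L}}_N(\theta;q)-\mathcal{L}(\theta)\big|\le \sqrt{\frac{2\sigma_q^2(\theta)\log(2/\delta')}{N}}+\frac{2M_qR^2\log(2/\delta')}{3N}.
\end{equation*}

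\textbf{Uniform step.} Assume $\theta\mapsto r_\theta^2$ is Lipschitz in the sup-norm with constant $L_r$ over a bounded $\Theta\subset\mathbb{R}^{P}$. Take an $\epsilon$-net $\Theta_\epsilon$ of cardinality $\mathcal{N}(\Theta,\epsilon)$ and apply the pointwise bound with $\delta'=\delta/\mathcal{N}(\Theta,\epsilon)$, using a union bound. For arbitrary $\theta$, the discretization error is at most $2M_qL_r\epsilon$, because both the empirical and population losses change by at most $M_qL_r\epsilon$ (respectively $L_r\epsilon$). Choosing $\epsilon\asymp 1/N$ makes this term of order $1/N$, so it can be absorbed into the second term. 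Setting $\mathcal{C}(\Theta,\delta):=\log(2\mathcal{N}(\Theta,1/N)/\delta)$ then yields the stated inequality with absolute constants such as $C_1=\sqrt2$ and $C_2$ of order one, after replacing $\sigma_q^2(\theta)$ by its supremum $\bar\sigma_q^2$. An alternative is a Talagrand/Bousquet concentration inequality for the supremum of the empirical process combined with a Rademacher complexity bound, which gives the same structure.

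\textbf{Main obstacle.} The hard part is the discretization step. Absorbing the net error into the $1/N$ term requires the weights to be bounded and the residual to be Lipschitz in $\theta$ uniformly. The residual involves PDE derivatives of $T_\theta$, so the Lipschitz constant can be large; it only enters logarithmically through $\mathcal{C}$. If $q$ depends on $\theta$, as with the adaptive proposal $q_\theta$, the samples are no longer independent of $\theta$. I would handle this by treating $q$ as fixed when the batch is drawn, since the proposal is refreshed between training stages, and state that conditioning explicitly.
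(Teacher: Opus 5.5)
Your proposal is correct and has the same skeleton as the paper's proof. Both start from a pointwise Bernstein bound for the weighted variables $Z_i(\theta)\in[0,M_qR^2]$ with variance $\sigma_q^2(\theta)$ and then uniformize over $\Theta$. The difference is that you actually carry out the second step. The paper only appeals to ``standard uniform concentration techniques'' for the existence of some $\mathcal{C}(\Theta,\delta)$ and universal $C_1,C_2$. It then asserts that $\mathcal{C}$ does not depend on $q$ because it is ``evaluated under $p$''. That claim is not derived, and the weighted class $\mathcal{F}_\Theta$ behaves differently under different $q$ through $p/q$.

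What the paper's version buys is generality: it does not commit to a Lipschitz, bounded-parameter setting, so any complexity measure (VC, Rademacher) could fill the slot. It pays for this in rigor. Your $\epsilon$-net argument needs the added assumptions that $\Theta$ is bounded and $\theta\mapsto r_\theta^2$ is sup-norm Lipschitz. In return it gives explicit constants ($C_1=\sqrt2$) and makes the $q$-independence of $\mathcal{C}$ transparent. The covering lives in parameter space, so all dependence on $q$ passes through $M_q$ (the Bernstein range and the net error) and through $\bar\sigma_q^2$. The cost is a $\log N$ inside $\mathcal{C}$ that the notation $\mathcal{C}(\Theta,\delta)$ hides. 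Removing it would require chaining or the Bousquet/local-Rademacher route you mention.

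One small fix is needed. With $\epsilon\asymp 1/N$ taken literally, the net error $2M_qL_r/N$ fits under $C_2M_qR^2\mathcal{C}/N$ only if $C_2$ grows with $L_r/R^2$. Choose $\epsilon=R^2/(L_rN)$ instead. Then the net error is $2M_qR^2/N\le(2/\log 2)\,M_qR^2\mathcal{C}/N$, and $L_r$ enters only through $\log\mathcal{N}(\Theta,R^2/(L_rN))$, which is what your closing remark intends. Also take the net inside $\Theta$, so that $\sigma_q^2(\theta')\le\bar\sigma_q^2$ applies at the net points.

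Your remark on $\theta$-dependent proposals fits the paper's algorithm. In Stage~2 the generator is frozen after being fit to $\theta_{\mathrm{src}}$. Conditioning on Stage~1 therefore gives exactly the fixed-$q$, i.i.d.\ setting the proposition assumes, a point the paper leaves implicit.
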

\end{tcolorbox}
\noindent The proof is provided in Appendix~\ref{prf:genbound}. Proposition~\ref{prop:genbound} delivers the central motivation of this paper: for a conventional static proposal $q(\cdot\mid t)$, $\bar\sigma_q^2$ remains a fixed constant that cannot be changed. Moreover, merely increasing $N$ is not computationally efficient as the leading term on the right hand side scales as $O(\sqrt{\bar\sigma_q^2/N})$. For instance, achieving a tenfold reduction in error requires a hundredfold increase in $N$. These observations justify the need for an adaptive sampling proposal $q_\theta(\cdot\mid t, \mu)$, which allows $\sigma_q^2$ to be directly minimized, thereby reducing the statistical error. Therefore, instead of inefficiently increasing $N$, an adaptive proposal enables us to effectively minimize the generalization error by directly targeting the constant $\bar\sigma_q^2$ in \eqref{eq:Bernstein}, whereas the conventional methods cannot directly reduce $\bar\sigma_q^2$.

In this paper, we further provide the theoretical results showing that the conventional static proposal, even the optimal case, is always strictly less effective than the adaptive sampling proposal:

\begin{tcolorbox}[colback=blue!4,colframe=blue!45!black,boxrule=0.7pt,
  arc=2pt,left=6pt,right=6pt,top=4pt,bottom=4pt,
  breakable]
\begin{theorem}\label{thm:strict-sep}
Let $\mathcal{Q}_{\mathrm{c}} = \{q(x\mid t, \mu)\}$ and $\mathcal{Q}_{\mathrm{nc}} = \{q(x\mid t)\}$ denote the families of conditional and unconditional ($\mu$-independent) proposal distributions, respectively. Define the optimal variance bounds as $V_{\mathrm{c}}^\star(\theta) := \inf_{q \in \mathcal{Q}_{\mathrm{c}}} \sigma_q^2(\theta)$ and $V_{\mathrm{nc}}^\star(\theta) := \inf_{q \in \mathcal{Q}_{\mathrm{nc}}} \sigma_q^2(\theta)$. Then, for every $\theta \in \Theta$,
\begin{equation}\label{eq:V-defs}
V_{\mathrm{c}}^\star(\theta) \;\leq\; V_{\mathrm{nc}}^\star(\theta).
\end{equation}
\end{theorem}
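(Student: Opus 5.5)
The plan is to prove the inequality by a simple inclusion of feasible sets, and then to sharpen it with an explicit computation of both infima (which is what the figure's claim of a strict separation relies on).

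\textbf{Step 1 (inclusion).} Every $\mu$-independent proposal $q(x\mid t)\in\mathcal{Q}_{\mathrm{nc}}$ can be regarded as a conditional proposal $\tilde q(x\mid t,\mu):=q(x\mid t)$ that happens to be constant in $\mu$. Thus $\mathcal{Q}_{\mathrm{nc}}\subseteq\mathcal{Q}_{\mathrm{c}}$. For fixed $\theta$, the per-sample variance $\sigma_q^2(\theta)$ is a functional of $q$ alone:
\begin{equation*}
\sigma_q^2(\theta)=\mathbb{E}_q\!\left[\frac{p(x)^2}{q(x\mid t,\mu)^2}\,r_\theta^4\right]-\mathcal{L}(\theta)^2 .
\end{equation*}
It therefore takes the same value whether $q$ is viewed as a member of $\mathcal{Q}_{\mathrm{nc}}$ or of $\mathcal{Q}_{\mathrm{c}}$. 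An infimum over a larger set cannot exceed the infimum over a subset, so $V^\star_{\mathrm{c}}(\theta)\le V^\star_{\mathrm{nc}}(\theta)$. This already proves the statement as written.

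\textbf{Step 2 (explicit infima).} Since $t,\mu$ are drawn from $p(t)p(\mu)$ in both cases, I would rewrite the second moment as
\begin{equation*}
\int p(t)p(\mu)\int \frac{p(x)^2 r_\theta(x,t;\mu)^4}{q(x\mid\cdot)}\,dx\,dt\,d\mu .
\end{equation*}
\emph{Conditional family.} For each fixed $(t,\mu)$, minimize $\int a(x)^2/q(x)\,dx$ subject to $\int q=1$. By Cauchy--Schwarz (or a Lagrange multiplier), the minimizer is $q^\star\propto a=p(x)r_\theta^2$, with minimum value $\big(\int p(x)r_\theta^2\,dx\big)^2$. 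This is the residual-proportional proposal anticipated by Lemma~\ref{lem:optimal-q}, and it gives
\begin{equation*}
V_{\mathrm{c}}^\star=\mathbb{E}_{t,\mu}\big[m(t,\mu)^2\big]-\mathcal{L}^2,\qquad m(t,\mu):=\int p(x)r_\theta^2\,dx .
\end{equation*}
\emph{Unconditional family.} Here $q$ may depend only on $t$, so first integrate over $\mu$ inside: minimize $\int b_t(x)/q\,dx$ with $b_t(x)=p(x)^2\,\mathbb{E}_\mu[r_\theta^4]$. The optimum is $q\propto\sqrt{b_t}=p(x)\sqrt{\mathbb{E}_\mu r_\theta^4}$, which gives
\begin{equation*}
V_{\mathrm{nc}}^\star=\mathbb{E}_t\Big[\Big(\int p(x)\sqrt{\mathbb{E}_\mu r_\theta^4}\,dx\Big)^2\Big]-\mathcal{L}^2 .
\end{equation*}

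\textbf{Step 3 (comparison and strictness).} The inequality can be re-derived pointwise in $t$ from
\begin{equation*}
\mathbb{E}_\mu\big[m^2\big]=\mathbb{E}_\mu\Big[\Big(\int p\,r_\theta^2\Big)^2\Big]\le\Big(\int p\sqrt{\mathbb{E}_\mu r_\theta^4}\Big)^2 ,
\end{equation*}
which is Minkowski's integral inequality (the $L^2(\mu)$ norm of an integral is at most the integral of the $L^2(\mu)$ norms). Equality holds iff $r_\theta^2(\cdot,t;\mu)$ is, up to a $\mu$-dependent scalar factor, the same function of $x$ for almost every $\mu$. In other words, equality requires that the normalized residual profile $r_\theta^2/m$ not vary with $\mu$. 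This yields strict inequality exactly when high-residual regions shift with the process condition, which is the physical situation motivating the paper.

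\textbf{Main obstacle.} Step 1 is routine. The real work is in Steps 2--3: justifying the interchange of $\inf$ and integral, i.e.\ that pointwise-in-$(t,\mu)$ optimization attains the global infimum. This requires measurability of the pointwise minimizer and finiteness and positivity of $m$. It also requires handling the degenerate cases $r_\theta\equiv0$ on sets of positive measure, where the infimum is approached but not attained. I would address these by restricting to proposals with full support and appealing to the equality case of Minkowski.
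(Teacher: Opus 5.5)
Your proof is correct, and your Step~1 takes a genuinely different and more economical route than the paper's. The paper never uses the inclusion $\mathcal{Q}_{\mathrm{nc}}\subseteq\mathcal{Q}_{\mathrm{c}}$. Instead it computes both infima explicitly, applying Cauchy--Schwarz on each $(t,\mu)$-slice (respectively on each $t$-slice, after averaging $r_\theta^4$ over $\mu$), and then compares the two closed forms via Minkowski's integral inequality in $L^2(\mu)$. That is exactly your Steps~2--3.

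Your inclusion argument proves the stated weak inequality in one line with no regularity at all. It needs neither attainment of the Cauchy--Schwarz bound, nor positivity of $m(t,\mu)$, nor any measurability. It also survives any admissibility restriction, such as the bounded likelihood ratio of Assumption~\ref{asm:supp}, as long as the same restriction is imposed on both families. What it cannot deliver is anything quantitative. The explicit route produces the variance-optimal proposal of Lemma~\ref{lem:optimal-q} (from the Cauchy--Schwarz equality case) and the separability criterion of Remark~\ref{rem:strict-sep} (from the Minkowski equality case). It is therefore what supports the strict separation advertised in the text.

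Your ``main obstacle'' is milder than you suggest. The Cauchy--Schwarz lower bound holds for every admissible $q$, and the minimizer $p(x)\,r_\theta^2/m(t,\mu)$ is given explicitly, hence is measurable; set it to $p$ on slices where $m=0$. So the only genuine issue is admissibility. Where $r_\theta$ vanishes on a set of positive measure, this minimizer violates the full-support and bounded-ratio requirement. The infimum is then only approached, for instance by defensive mixtures $(1-\epsilon)q^\star+\epsilon p$, whose second moment is at most $(1-\epsilon)^{-1}$ times the bound. The paper passes over this point silently when it writes equality for $V_{\mathrm{c}}^\star+\mathcal{L}(\theta)^2$, so your treatment is more careful there.
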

\end{tcolorbox}

\begin{tcolorbox}[colback=gray!5,colframe=gray!55!black,boxrule=0.7pt,arc=2pt]
\begin{remark}\label{rem:strict-sep}
Equality $V_{\mathrm{c}}^\star(\theta) = V_{\mathrm{nc}}^\star(\theta)$ holds if and only if the squared residual separates as $r_\theta(x,t;\mu)^2 = a_t(x)\, c_t(\mu)$, meaning its normalized spatial shape is independent of $\mu$.
\end{remark}
\end{tcolorbox}

\noindent The proofs are provided in Appendix~\ref{prf:strict-sep}. In other words, Theorem~\ref{thm:strict-sep} and Remark~\ref{rem:strict-sep} establish the superiority of conditional sampling over conventional static strategies under theoretical guarantee. In the conventional static strategies, their independence from $\mu$ imposes a fundamental performance ceiling $V_{\mathrm{nc}}^\star$. Intuitively, this ceiling is unavoidable because process conditions significantly alter the residual landscapes rather than merely scaling their overall magnitude. Since no single $\mu$-independent spatial profile can simultaneously capture these parameter-dependent shape variations, the residual landscape is inherently non-separable, guaranteeing that $V_{\mathrm{c}}^\star(\theta) < V_{\mathrm{nc}}^\star(\theta)$ strictly holds in metal AM thermal modeling. By conditioning the proposal directly on $\mu$, our method systematically breaks this static variance barrier and approaches $V_{\mathrm{c}}^\star$, enabling robust generalization across diverse process regimes.

Finally, we now characterize the optimal case of adaptive proposal that attains $V_{\mathrm{c}}^\star(\theta)$ explicitly. Within the parameter-conditioned family $\mathcal{Q}_{\mathrm{c}}$, the variance functional $\sigma_q^2(\theta)$ yields a unique minimizer:
\begin{tcolorbox}[colback=green!3,colframe=green!25!gray,boxrule=0.7pt,
arc=2pt,left=6pt,right=6pt,top=4pt,bottom=4pt,
breakable]
\begin{lemma}[Variance-optimal adaptive proposal]\label{lem:optimal-q}
Fix $\theta \in \Theta$ with $\mathcal{L}(\theta) > 0$. Within $\mathcal{Q}_{\mathrm{c}}$, the spatial conditional that minimizes the variance functional $\sigma_q^2(\theta)$ is the one that places spatial mass in proportion to the squared residual under each process condition and each instant:
\begin{equation}\label{eq:q-star}
q^\star_\theta(x \mid t, \mu) \;\propto\; r_\theta(x,t;\mu)^2
\end{equation}
\end{lemma}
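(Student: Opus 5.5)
The plan is to write the per-sample variance explicitly and minimize it pointwise in $(t,\mu)$ with Cauchy--Schwarz, the standard zero-variance importance-sampling argument done fiber by fiber. With $q(x,t,\mu)=q(x\mid t,\mu)p(t)p(\mu)$, the importance weight is $w = p(x)/q(x\mid t,\mu)$ (with $p(x)$ uniform on $\Omega_x$). The per-sample variance then becomes
\begin{equation*}
\sigma_q^2(\theta) = \mathbb{E}_q\!\left[w^2 r_\theta^4\right] - \mathcal{L}(\theta)^2
= \int_{[0,t_{\mathrm{end}}]\times\mathcal{P}} \left(\int_{\Omega_x} \frac{p(x)^2\, r_\theta(x,t;\mu)^4}{q(x\mid t,\mu)}\,dx\right) p(t)p(\mu)\,dt\,d\mu \;-\; \mathcal{L}(\theta)^2 .
\end{equation*}
The subtracted term $\mathcal{L}(\theta)^2$ does not depend on $q$, because the estimator is unbiased whenever $q(\cdot\mid t,\mu)>0$ wherever $r_\theta^2>0$. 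Minimizing $\sigma_q^2$ over $\mathcal{Q}_{\mathrm{c}}$ therefore reduces to minimizing the first integral.

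The key observation is that in $\mathcal{Q}_{\mathrm{c}}$ the conditional $q(\cdot\mid t,\mu)$ may be chosen independently for each $(t,\mu)$. The outer integral is thus minimized by minimizing the inner functional $J_{t,\mu}(q) = \int_{\Omega_x} p(x)^2 r_\theta^4/q\,dx$ separately on each fiber, subject to $\int q\,dx = 1$. By Cauchy--Schwarz,
\begin{equation*}
\left(\int_{\Omega_x} p\, r_\theta^2\,dx\right)^2 = \left(\int_{\Omega_x} \frac{p\, r_\theta^2}{\sqrt{q}}\sqrt{q}\,dx\right)^2 \le J_{t,\mu}(q)\cdot 1,
\end{equation*}
with equality iff $p(x) r_\theta^2/\sqrt{q} \propto \sqrt{q}$, i.e.\ $q\propto p(x)\,r_\theta(x,t;\mu)^2 \propto r_\theta^2$, since $p(x)$ is constant. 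Substituting this choice attains the bound and gives $V_{\mathrm{c}}^\star(\theta) = \mathbb{E}_{t,\mu}\big[(\int p\, r_\theta^2\,dx)^2\big] - \mathcal{L}(\theta)^2$. Strict convexity of $q\mapsto 1/q$ then gives uniqueness almost everywhere on the fibers where the normalizer $Z_\theta(t,\mu)=\int r_\theta^2\,dx$ is positive.

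The delicate step is the fibers with $Z_\theta(t,\mu)=0$, where the expression $r_\theta^2/Z_\theta$ is undefined. This is exactly where the hypothesis $\mathcal{L}(\theta)>0$ enters: it guarantees $Z_\theta>0$ on a set of positive $p(t)p(\mu)$-measure, so the optimal proposal is nontrivial. On the null fibers any density contributes zero to $J_{t,\mu}$, so I would define $q^\star$ there arbitrarily (e.g.\ uniform) and state uniqueness only up to these null sets. I also need to make sure the proposal class is restricted to densities whose support covers $\{r_\theta^2>0\}$, so that unbiasedness, and hence the $q$-independence of the subtracted $\mathcal{L}(\theta)^2$ term, holds. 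Finally, I would need integrability of $r_\theta^4$ (bounded residuals on the compact domain) to justify exchanging the fiberwise minimization with the outer integral; a measurable-selection concern does not arise because $q^\star$ is given in closed form.
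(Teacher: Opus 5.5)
Your proposal is correct and is the paper's own argument: the paper derives the lemma from the equality case of the fiberwise Cauchy--Schwarz bound in Case~1 of the proof of Theorem~\ref{thm:strict-sep}, pairing $p\,r_\theta^2/\sqrt{q}$ with $\sqrt{q}$ under $\int q\,dx = 1$ and using uniformity of $p(x)$ to pass from $q \propto p\,r_\theta^2$ to $q \propto r_\theta^2$, exactly as you do. Your extra treatment of the fibers with $Z_\theta(t,\mu)=0$, the support condition needed for unbiasedness, and uniqueness up to null sets is more careful than the paper, which leaves these points implicit.
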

\end{tcolorbox}
\noindent The proof is directly from the equality condition of the Cauchy--Schwarz inequality applied in Case 1 of the proof of Theorem~\ref{thm:strict-sep} (Appendix~\ref{prf:strict-sep}).

To sum up, minimizing the generalization error is strongly tied to reducing the statistical error, and the effective lever is the variance of the empirical loss estimator rather than merely expanding the sample size $N$. In this regard, conventional static, $\mu$-independent proposals $q(\cdot\mid t)$ are strictly worse than $\mu$-conditional proposals $q(\cdot\mid t,\mu)$, since the minimal variance of the latter is strictly lower. The $\mu$-conditional proposal attaining that minimum is proportional to the squared residual under each process condition and instant, which is precisely an adaptive sampling proposal.

\subsection{Domain-Informed Adaptive Sampling via Amortized CFM}\label{sec:cfm}

In the previous subsection, we demonstrated that the $\mu$-conditional adaptive proposal $q_\theta(x\mid t,\mu)$ is superior to a conventional static proposal $q(x\mid t)$ and that its optimal form $q^\star_\theta(x\mid t,\mu)$ is proportional to the squared residual $r_\theta(x,t;\mu)^2$. Nevertheless, directly implementing $q^\star_\theta$ in practice faces the following challenges. First, calculating the exact normalization constant for $q^\star_\theta$ is computationally intractable due to the high-dimensional integration of $r_\theta(x,t;\mu)^2$. Second, drawing samples directly from this non-standard distribution $q^\star_\theta$ across every $t$ and the PDE parameter space $\mathcal{P}$ is non-trivial, requiring expensive sampling procedures such as MCMC. Finally, evaluating the squared residual $r_\theta(x,t;\mu)^2$ at every sampling iteration introduces significant computational overhead, especially in high-dimensional domains like $\Omega\times\mathcal{P}\subseteq\mathbb{R}^{d+m}$ in our case.

To address these challenges, we propose a domain-informed adaptive sampling using Flow Matching (FM). We first use physical priors based on metal AM domain knowledge to warm-start the initial predictor, obtaining a stable and reliable residual distribution $r_{\theta_{\mathrm{src}}}$. We then use Flow Matching to approximate the target residual distribution as a continuous function and condition FM on both time $t$ and process conditions $\mu$, which we refer to as Conditional Flow Matching (CFM), to alleviate the curse of dimensionality. Finally, we blend the domain-informed base with the learned high-residual proposal to handle the exploration--exploitation trade-off, and finally train $f_\theta$ with the refined collocation points.

This approach reduces computational overhead and mitigates the curse of dimensionality. Flow Matching enables continuous density modeling, serving as a scalable alternative to traditional sampling algorithms. By further training the FM model conditioned on both time $t$ and process conditions $\mu$, the sampling process is amortized, allowing direct sample generation from high-residual distributions for each process condition $\mu$ in a single forward pass. This streamlines the training pipeline and provides representative collocation points per process condition, enabling the predictor $f_\theta$ to minimize generalization error across unseen process conditions--our primary objective.

\subsubsection{Flow matching background}

We briefly review Flow Matching (FM), the deep generative backbone used in this study. Flow Matching \citep{lipman2022flow} is a likelihood-free training criterion for continuous normalizing flows designed to learn a time-dependent vector field that transports samples from a tractable source distribution to a desired target distribution. Specifically, let $\{p_\tau\}_{\tau \in [0,1]}$ be a family of probability densities on $\mathbb{R}^d$ interpolating from a source density $p_0$ at $\tau=0$ to a target density $p_1$ at $\tau=1$. A time-dependent vector field $v: [0,1] \times \mathbb{R}^d \to \mathbb{R}^d$ generates this probability path if integrating the velocity field $\dot{z}_\tau = v(\tau, z_\tau)$ starting from $z_0 \sim p_0$ yields $z_\tau \sim p_\tau$ for every $\tau \in [0,1]$. To construct this vector field, Flow Matching regresses a parameterized vector field $v_\phi$ onto the target displacement vectors between paired source and target samples. Given a coupling $(x^{(0)}, x^{(1)}) \sim \gamma$ with marginals $p_0$ and $p_1$, the linear interpolant $x^{(\tau)} = (1-\tau)x^{(0)} + \tau x^{(1)}$ thus yields a constant velocity field $\dot{x}^{(\tau)} = x^{(1)} - x^{(0)}$.

A key advantage of Flow Matching lies in its straightforward optimization trajectory. Training Flow Matching involves a simple mean-squared error (MSE) regression while other generative models are prone to exhibit optimization challenges (e.g., GANs) \citep{arjovsky2017towards}. Therefore, it enables rapid conditional sample generation while maintaining training stability across high-dimensional parameter spaces.

\subsubsection{Training procedure}

\begin{figure}[tp]
    \centering
    \includegraphics[width=1\textwidth]{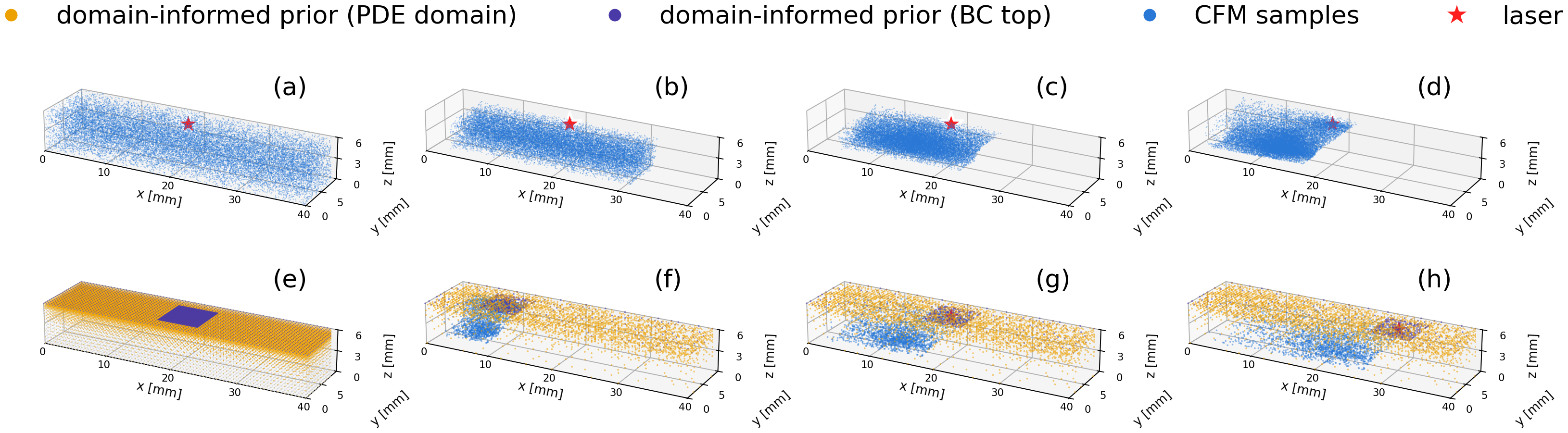} 
    \caption{Domain-informed adaptive collocation via CFM. \textbf{Top} (a--d): CFM transport of uniform base samples ($\tau=0$) into the adaptive proposal $q^\star_{\theta_{\mathrm{src}}}(\cdot\mid t, \mu)$ ($\tau=1$), which concentrates in high-residual regions. \textbf{Bottom}: (e) Domain-informed base $b(\cdot\mid t)$; (f--h) Training mixture $q_\alpha^b$ at $t=0.5, 1.5$, and $2.5\,\text{s}$, where the base remains fixed while the CFM component tracks high-residual regions (visualized for Ti-6Al-4V).}
\label{fig:section2_4}
\end{figure}

Our proposed domain-informed adaptive sampling framework adopts a two-stage training scheme. Stage~1 trains a CFM generator $g_\phi$ to amortize the variance-optimal proposal $q^\star_{\theta_{\mathrm{src}}}(x\mid t,\mu)$, while Stage~2 trains the final predictor $f_\theta$ using a domain-informed mixture proposal that balances exploration and exploitation. The overall training procedure is summarized in Algorithm~\ref{alg:main}.

\begin{algorithm}[t]
\caption{Domain-Informed Adaptive Sampling via Conditional Flow Matching}
\label{alg:main}
\renewcommand{\baselinestretch}{1.0}\selectfont
\begin{algorithmic}[1]
\Require domain-informed base $b(\cdot\mid t)$; mixture weight $\alpha$;
number of stored conditions $J$; pool size $G$; resample size $M_{\mathrm{res}}$;
iteration counts $K_{\mathrm{warm}}$, $K_{\mathrm{flow}}$, $K_{\mathrm{PINN}}$;
batch sizes $B$ (CFM) and $N$ (PINN); ODE steps $N_{\mathrm{ODE}}$.
\Statex
\Statex \textbf{Stage 1: train the amortized generator $g_\phi$}
\State \textbf{(Warm-start)} Train the source predictor $f_{\theta_{\mathrm{src}}}$
  for $K_{\mathrm{warm}}$ steps on the residual loss \eqref{eq:total_residual},
  drawing $t$, $\mu$ uniformly and $x\sim b(\cdot\mid t)$; then freeze
  $\theta_{\mathrm{src}}$.
\State \textbf{(Score)} For each of $J$ uniformly drawn conditions $\mu_j$:
  draw a pool $\mathcal{D}_j=\{(\tilde{x}_k,\tilde t_k)\}_{k=1}^{G}$
  uniformly over $\Omega$ and score every element by
  $s_{j,k}\gets r_{\theta_{\mathrm{src}}}(\tilde{x}_k,\tilde t_k;\mu_j)^2$.
\For{$K_{\mathrm{flow}}$ steps}
  \Comment{CFM training}
  \State Pick $B$ conditions among the $J$ stored ones; for each, resample
    one flow target $(x^{(1)},t^\dagger)$ from $\mathcal{D}_j$ via
    \eqref{eq:resample} and pair it with a source draw $x^{(0)}\sim p_0$.
  \State Update $\phi$ by one gradient step on
    $\mathcal{L}_{\mathrm{CFM}}(\phi)$ in \eqref{eq:cfm-loss-impl}.
\EndFor
\Statex
\Statex \textbf{Stage 2: train the final predictor $f_\theta$ with the frozen generator}
\State Initialize $f_\theta$ from scratch.
\For{$K_{\mathrm{PINN}}$ steps}
  \State Draw $t_i$, $\mu_i$ uniformly and
    $x_i\sim q_\alpha^{b}(\cdot\mid t_i,\mu_i)$ from the mixture
    \eqref{eq:mixture} for $i=1,\dots,N$; samples from the $g_\phi$
    component are obtained by integrating $v_\phi$ with
    $N_{\mathrm{ODE}}$ forward-Euler steps from $x^{(0)}\sim p_0$.
  \State Update $\theta$ by one gradient step on the residual loss
    $\tfrac{1}{N}\sum_{i=1}^{N} r_\theta(x_i,t_i;\mu_i)^2$.
\EndFor
\State \Return the generator $g_\phi$ and the predictor $f_\theta$.
\end{algorithmic}
\end{algorithm}

\paragraph{Stage 1: Warm-start \& Amortized Generator Training}

Let $b(x\mid t)$ denote a $\mu$-independent, domain-informed base density
(visualized in Figure~\ref{fig:section2_4}(e)).
Collocation points are initially drawn from $b$ to warm-start the predictor $f_{\theta_{\mathrm{src}}}$, until it yields physics-consistent solutions with a reliable residual distribution $r_{\theta_{\mathrm{src}}}$.

We now apply CFM to our adaptive sampling setting. Here, our goal is to learn a velocity field $v_\phi$ transporting a simple source density $p_0$ on $\Omega_{x}$ to the variance-optimal target $q^\star_{\theta_{\mathrm{src}}}(\cdot\mid t,\mu)$. We let $p_0$ be uniform on $\Omega_{x}$, which is convenient to sample while able to reach any high-residual region. Note that the flow-matching time $\tau \in [0,1]$ is entirely distinct from the physical time $t$, which enters only as a conditioning input held fixed along the path. In practice, collocation points for the PDE, BC, and IC residuals are drawn on different subdomains; we therefore additionally condition $v_\phi$ on a residual-group identifier $\ell\in\{\mathrm{pde},\mathrm{bc},\mathrm{ic}\}$, which we omit from the notation below for brevity.

To construct the targets, we freeze the source parameters $\theta_{\mathrm{src}}$ and work one condition at a time. For each of $J$ uniformly sampled process conditions from $\mu_j\sim\mathcal{P}$, we draw a pool $\mathcal{D}_j = \{(\tilde{x}_{j,k}, \tilde t_{j,k})\}_{k=1}^{G}$ uniformly over $\Omega$, score it by $s_{j,k} = r_{\theta_{\mathrm{src}}}(\tilde{x}_{j,k}, \tilde t_{j,k}; \mu_j)^2$, and resample $M_{\mathrm{res}}$ elements:
\begin{equation}\label{eq:resample}
\{(x_{j,i}^\dagger,\, t_{j,i}^\dagger)\}_{i=1}^{M_{\mathrm{res}}}
\overset{\text{i.i.d.}}{\sim}
\mathrm{Multinomial}\!\big(\mathcal{D}_j;\; w_{j,k} \propto s_{j,k}\big),
\qquad j = 1,\dots,J.
\end{equation}
Uniform candidates make the weights proportional to the residual itself, with no proposal density to estimate. We then split each draw, conditioning on $(\mu_j, t_{j,i}^\dagger)$ and taking $x_{j,i}^\dagger$ as the flow target; because the weights are normalized within a condition, this recovers exactly $q^\star_{\theta_{\mathrm{src}}}(\cdot\mid t,\mu)$ of Lemma~\ref{lem:optimal-q}. Per-condition normalization also fixes every $\mu_j$ at $M_{\mathrm{res}}$ targets, so the parameter marginal is uniform by construction.

Pairing each target with an independent source draw $x_i^{(0)} \sim p_0$, we train $v_\phi$ via stochastic gradient descent on
\begin{equation}\label{eq:cfm-loss-impl}
\mathcal{L}_{\mathrm{CFM}}(\phi) =
\mathbb{E}_{\tau \sim U[0,1]}\,
\mathbb{E}_{(t,\, \mu,\, x^{(0)},\, x^{(1)})}
\!\Big[\big\|v_\phi\!\big(\tau,\, x^{(\tau)},\, t,\, \mu\big)
- \big(x^{(1)} - x^{(0)}\big)\big\|^2\Big],
\end{equation}
where $x^{(\tau)} := (1-\tau)x^{(0)} + \tau\,x^{(1)}$ is the linear interpolant and $x^{(1)} = x_i^\dagger$. The trained $v_\phi$ then defines the
generator $g_\phi(\cdot\mid t,\mu) \approx q^\star_{\theta_{\mathrm{src}}}(\cdot\mid t,\mu)$ as the law of
$x^{(1)}$ obtained by integrating
$\tfrac{d}{d\tau}x^{(\tau)} = v_\phi(\tau, x^{(\tau)}, t, \mu)$ from
$x^{(0)} \sim p_0$ to $\tau = 1$, which we approximate with the forward Euler
scheme using $N_{\mathrm{ODE}} = 10$.

\paragraph{Stage 2: Predictor Training via Domain-Informed Mixture}
In Stage 2, we freeze the trained CFM generator $g_\phi$ and initialize the final predictor $f_\theta$ from scratch. 
In contrast to Stage 1, the collocation points $x$ is drawn from both the learned proposal $g_\phi$ and the base $b$ with the proposed domain-informed mixture proposal $q_\alpha^{b}$:
\begin{equation}\label{eq:mixture}
q_\alpha^{b}(x\mid t,\mu) := (1-\alpha)\,g_\phi(x\mid t,\mu) + \alpha\,b(x\mid t)
\end{equation}
where the mixture weight $\alpha$ controls the exploration--exploitation trade-off: smaller $\alpha$ more aggressively exploits the learned high-residual regions, while larger $\alpha$ retains broader coverage that defends against estimation error in $r_\theta$. The predictor parameter $\theta$ are then updated on the residual loss evaluated at collocation points drawn from $q_\alpha^{b}$. This prevents the risk that exclusive sampling from the learned high-residual proposal $g_\phi$ neglects underexplored regions where $f_\theta$ initially underestimates the residual.

\section{Experiments}\label{sec:experiments}

\paragraph{System Description}
To evaluate the proposed method, we adopt a widely recognized numerical benchmark \citep{liao2023hybrid, faegh2025physics, lee2026material} based on a bare-plate laser scanning example representative of metal AM processes. In this benchmark, a moving laser heat source scans the surface of a metal substrate, generating highly localized, transient, and rapid thermal gradients governed by laser heating, conductive heat transfer within the substrate, and convective/radiative heat losses at the boundaries. Representative snapshots of the resulting temperature field are illustrated in Figure~\ref{fig:ti64_snapshots}. To ensure a fair comparison, all simulation setups and process parameters follow the previous research \citep{lee2026material}.

\begin{figure}[htb]
\centering
\includegraphics[width=\linewidth]{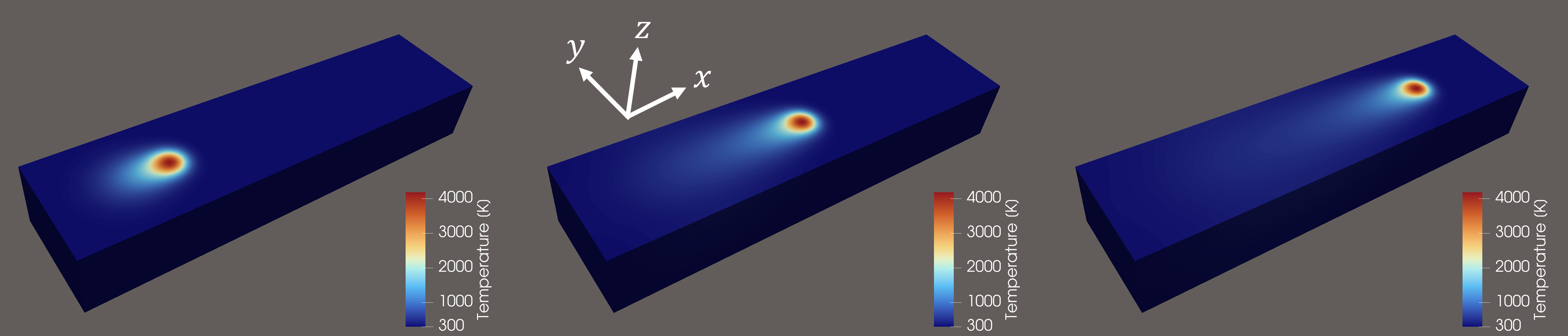}
\caption{
Ground-truth temperature field snapshots at $t = 0.5$, $1.5$, and $2.5\,\mathrm{s}$ obtained via high-fidelity \texttt{JAX-AM} simulation for the bare-plate scanning benchmark (visualized using Ti--6Al--4V). The thermal evolution captures the localized moving laser heat source and subsequent heat diffusion.
}
\label{fig:ti64_snapshots}
\end{figure}

\paragraph{Process conditions $\mu$.}
Following the experimental design in \citet{lee2026material}, we consider five materials widely used in metal AM as unseen process conditions $\mu$ to assess the generalizability of the proposed framework. It is noteworthy that variations in each material's thermophysical properties substantially alter its thermal behavior, inducing significant shifts in the underlying residual landscapes. We define the continuous process condition space $\mathcal{P}$ over the characteristic ranges of density $\rho_c$, heat capacity $C_p$, and thermal conductivity $k_c$:
\begin{equation*}
\mathcal{P} := [\rho_{c,\text{min}}, \rho_{c,\text{max}}] \times [C_{p,\text{min}}, C_{p,\text{max}}] \times [k_{c,\text{min}}, k_{c,\text{max}}]
\end{equation*}
where the boundaries are set to $\rho_c \in [3000, 10000]\,\mathrm{kg/m}^3$, $C_p \in [300, 1000]\,\mathrm{J/(kg\cdot K)}$, and $k_c \in [3, 100]\,\mathrm{W/(m\cdot K)}$, identically applied at both training and testing phases as formulated in Section~\ref{sec:setup}. We evaluate five representative materials with properties denoted as $(\rho_c, C_p, k_c)$:

\begin{itemize}
    \item \textbf{Ti--6Al--4V} $(4430, 560, 6.7)$: Standard aerospace alloy characterized by a high strength-to-weight ratio.
    \item \textbf{Inconel 718} $(8220, 435, 11.4)$: Nickel-based superalloy featuring high-temperature thermal stability.
    \item \textbf{SS 316L} $(8000, 500, 16.0)$: Austenitic stainless steel with high corrosion resistance.
    \item \textbf{AlSi10Mg} $(2670, 950, 150)$: Lightweight structural alloy whose thermal conductivity $k_c$ exceeds the upper limit of $\mathcal{P}$ by $1.5\times$.
    \item \textbf{Copper} $(8960, 385, 401)$: Highly conductive material with an extreme thermal conductivity $k_c$ exceeding the upper bound of $\mathcal{P}$ by more than $4\times$.
\end{itemize}

Here, the first three materials represent in-distribution (ID) cases, whereas AlSi10Mg and Copper serve as OOD extrapolation cases as their thermal conductivities $k_c$ substantially exceed the training upper bound $k_{c,\text{max}}$. Note that OOD evaluation is known to be notoriously challenging compared to ID cases, allowing us to validate whether the extrapolation capability demonstrated in prior work \citep{lee2026material} remains effective under our adaptive sampling scheme.

\paragraph{Baselines and Evaluation Metric}
We evaluate the proposed method against two PINN baselines.

\begin{itemize}
    \item \textbf{Non-parametric PINN (N-PINN)} \citep{liao2023hybrid}: Conventional PINN formulation where material properties are treated as fixed constants. A separate network must be trained from scratch for each material condition.

    \item \textbf{Advanced Parametric PINN (AP-PINN)} \citep{lee2026material}: State-of-the-art parametric PINN for material-agnostic thermal modeling in metal AM, trained using a static domain-informed collocation strategy $b(x\mid t)$. Our predictor $f_\theta$ adopts this architecture as our backbone without modification, ensuring that any performance gain is entirely attributable to the proposed adaptive sampling strategy.
\end{itemize}

For quantitative comparison, we report the relative $L_2$ error, a standard metric in the PINN literature \citep{raissi2019physics}. This metric measures the normalized discrepancy between the predicted temperature field $T_\theta$ and the ground-truth solution $T$ over the uniformly discretized spatiotemporal domain $\Omega$. For a given process condition $\mu$, the relative $L_2$ error is defined as:
\begin{equation*}
\text{Relative } L_2 \text{ Error} (\%) = \frac{\sqrt{\sum_{n=1}^{N_\text{FEM}} \left| T\left(x^{(n)},t^{(n)}; \mu\right) - T_{\theta}\left(x^{(n)},t^{(n)};\mu\right) \right|^2}}{\sqrt{\sum_{n=1}^{N_\text{FEM}} \left| T\left(x^{(n)},t^{(n)}; \mu\right) \right|^2}} \times 100
\end{equation*}
Here, $T$ represents the reference ground-truth temperature field obtained via \texttt{JAX-AM}\footnote{\url{https://github.com/CMSL-HKUST/jax-am}} \citep{xue2023jax}, an open-source library for high-fidelity FEM simulations in metal AM. $N_\text{FEM}$ denotes the total number of nodes in the FEM simulations, which is $N_\text{FEM} = 88{,}939$ in this study.

\paragraph{Implementation Details}
To maintain consistency across all models, baselines are implemented following their original hyperparameters and training configurations. The loss weights are set to $(\lambda_{\mathrm{pde}}, \lambda_{\mathrm{bc}}, \lambda_{\mathrm{ic}}) = (1, 1, 10^{-4})$ following \citet{liao2023hybrid, lee2026material}. Both baselines adopt the same domain-informed static collocation sampling strategy, and we adopt their defined base distribution $b(x \mid t)$ for our method as well. Furthermore, the total collocation point budget $N$ and its per-group allocations ($N_\text{pde}, N_\text{bc}, N_\text{ic}$) strictly match the baseline setup, guaranteeing an identical sampling budget across all methods.

For our sampling scheme, the CFM velocity network $v_\phi$ is parameterized using $3$ hidden layers of width $128$ with SiLU activations, encoding the flow-matching time $\tau$ via a $32$-dimensional sinusoidal embedding. The generator $g_\phi$ is a substantially smaller network relative to our predictor $f_\theta$, which is identical to the predictor of the AP-PINN baseline. The mixture weight is fixed to $\alpha = 0.6$, and its hyperparameter sensitivity is discussed in the following subsection. Note that training the CFM generator takes approximately $2$ minutes on a single GPU and drawing adaptive samples adds less than $1\%$ to the per-run training time relative to training $f_\theta$.

All quantitative results are reported as the mean $\pm$ standard deviation computed across five random seeds. Experiments were conducted on a single NVIDIA RTX 5090 GPU using Python 3.10 and PyTorch	2.9. The complete implementation, including data and code are available at \url{https://github.com/hsleecri/AdaptiveSamplingPINN-AM}.

\newpage
\section{Results and Discussion}\label{sec:results}

\paragraph{Quantitative Evaluation}
Table~\ref{tab:results} summarizes the quantitative comparison between the baselines and our proposed method. (Qualitative temperature fields from our predictions and FEM ground-truth solutions at $t = 0.5\text{ s}, 1.5\text{ s}$, and $2.5\text{ s}$ are provided in Appendix~\ref{app:qualit}.) While the state-of-the-art parametric PINN baseline (AP-PINN) outperforms the non-parametric baseline (N-PINN) across five material spanning both ID and OOD regimes, our proposed method considerably outperforms both baselines. Specifically, our method further reduces the relative $L_2$ error of AP-PINN by an average of $62.1\%$ and up to $70.1\%$, achieving consistent gains across all materials. 

\begin{table}[htbp]
\centering
\caption{Relative $L_2$ error [\%] (mean $\pm$ std over five seeds) across five materials. \emph{Average} denotes the mean across all five materials. The last row reports the error reduction of the proposed method relative to the AP-PINN baseline. Lower is better.}
\label{tab:results}
\resizebox{\linewidth}{!}{
\begin{tabular}{l c ccc cc}
\toprule
 & & \multicolumn{3}{c}{In-distribution (ID)} & \multicolumn{2}{c}{Out-of-distribution (OOD)} \\
\cmidrule(lr){3-5}\cmidrule(lr){6-7}
Method & Average & Ti-6Al-4V & Inconel 718 & SS 316L & AlSi10Mg & Copper \\
 & & $(4430, 560, 6.7)$ & $(8220, 435, 11.4)$ & $(8000, 500, 16.0)$ & $(2670, 950, 150)$ & $(8960, 385, 401)$ \\
\midrule
N-PINN& $4.82$ & $6.19 \pm 1.00$ & $6.09 \pm 0.83$ & $5.94 \pm 0.63$ & $4.25 \pm 0.35$ & $1.63 \pm 0.13$ \\
\midrule
AP-PINN& $3.91$ & $5.80 \pm 0.32$ & $4.85 \pm 0.30$ & $4.85 \pm 0.28$ & $2.86 \pm 0.07$ & $1.18 \pm 0.05$ \\
\textbf{Proposed} & $\mathbf{1.48}$ & $\mathbf{2.55 \pm 0.49}$ & $\mathbf{1.71 \pm 0.36}$ & $\mathbf{1.45 \pm 0.21}$ & $\mathbf{1.19 \pm 0.21}$ & $\mathbf{0.51 \pm 0.10}$ \\
\midrule
\rowcolor{gray!15}
\textbf{Reduction [\%]} & $\mathbf{62.1\%}$ & $\mathbf{56.0\%}$ & $\mathbf{64.7\%}$ & $\mathbf{70.1\%}$ & $\mathbf{58.4\%}$ & $\mathbf{56.8\%}$ \\
\bottomrule
\end{tabular}
}
\end{table}

\begin{figure}[H]
  \centering
  \includegraphics[width=\linewidth]{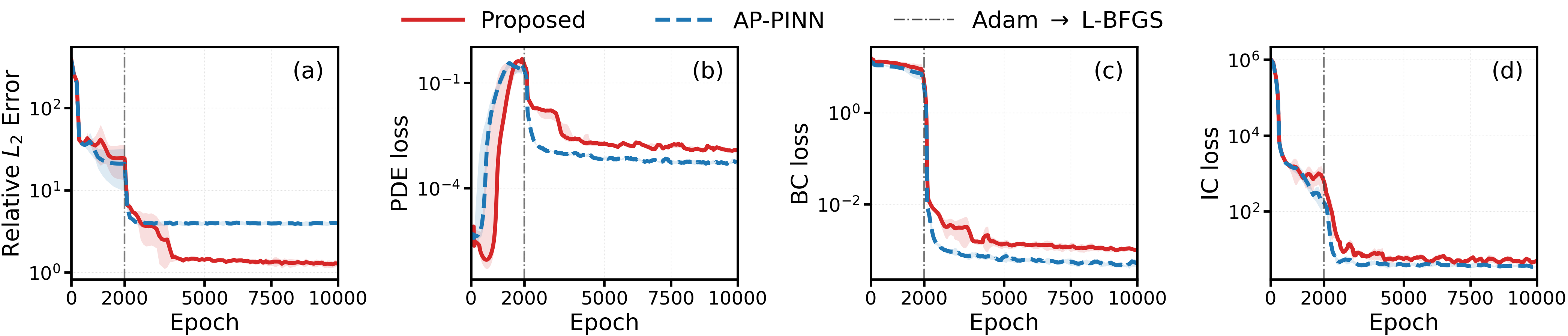}
    \caption{Training dynamics (mean $\pm$ std over five seeds): (a) average relative $L_2$ error on the five materials, and (b--d) the PDE, BC, and IC training losses evaluated on each method's own sampled collocation points. The proposed method reaches a lower relative $L_2$ error despite higher training losses on the three losses, which is a direct consequence of concentrating collocation points in high-residual regions.}
  \label{fig:training_dynamics}
\end{figure}
 
\paragraph{Training Loss Analysis}
To discover where this superiority stems from, Figure~\ref{fig:training_dynamics} illustrates the training dynamics of AP-PINN alongside our proposed method. Interestingly, the PDE, BC, and IC training losses of the proposed method are consistently higher than those of AP-PINN, whereas our relative $L_2$ error is substantially lower. We attribute this phenomenon to the expected signature of adaptive collocation: our adaptive sampling proposal concentrates a higher proportion of points in high-residual regions, naturally leading to a larger residual over the sampled points.

Specifically, after the transition of optimizers from Adam to stochastic L-BFGS (gray dashed line), our model starts to minimize the three losses simultaneously, indicating its convergence toward a good local optimum. During this stage, PDE and BC losses are dramatically reduced, and the relative $L_2$ error rapidly decreases at the same time. This indicates that, despite these higher evaluated residuals, the strategically placed collocation points allow the model to learn the underlying thermodynamics far more effectively across both the spatiotemporal domain and PDE parameter space. Consequently, effective training on points sampled from high-residual regions ultimately yields significantly lower relative $L_2$ errors.

\begin{figure}[htbp]
  \centering
  \includegraphics[width=0.8\linewidth]{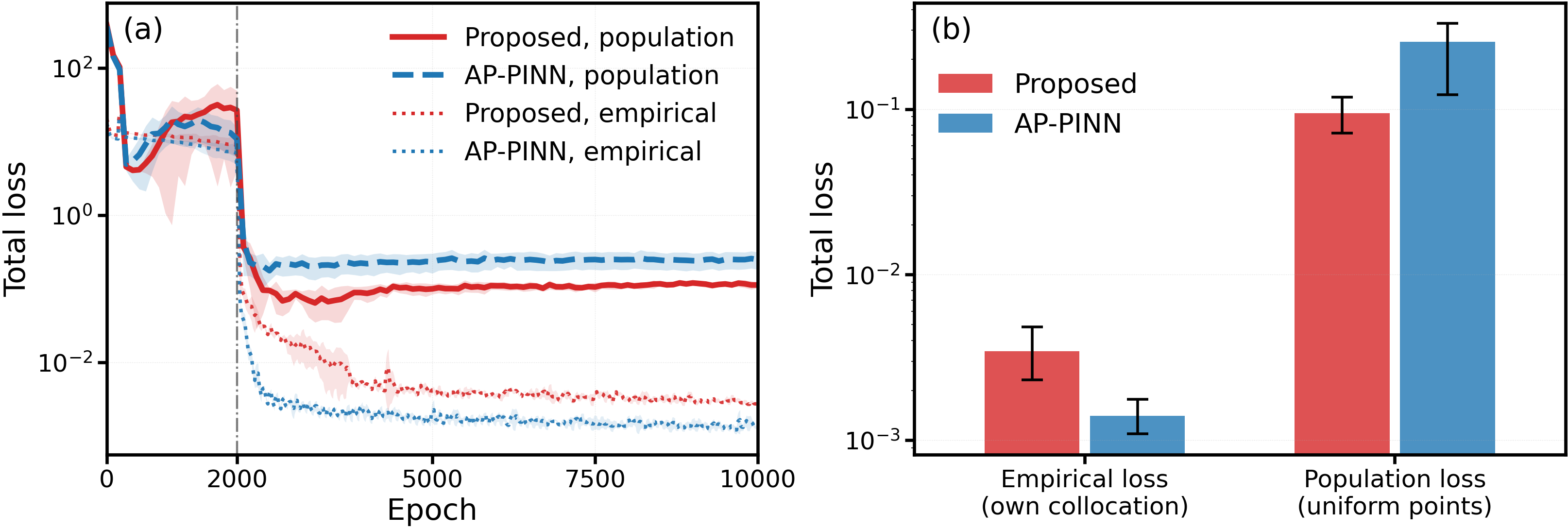}
\caption{Comparison of empirical loss $\widehat{\mathcal{L}}_N(\theta;q)$~\eqref{eq:Lemp_estimator} and population loss $\mathcal{L}(\theta)$~\eqref{eq:Lpop} (mean $\pm$ std over five seeds): (a) training evolution and (b) final values. The proposed method exhibits a higher empirical loss during training, yet consistently achieves a lower population loss by focusing samples on high-residual regions, which leads to more effective learning.}
  \label{fig:total_loss}
\end{figure}

To substantiate this claim, we re-evaluate the training dynamics using the weighted sum of residuals~\eqref{eq:total_residual} computed on collocation points uniformly sampled over $\Omega \times \mathcal{P}$, which serves as an unbiased estimator of the true population loss $\mathcal{L}(\theta)$, rather than relying on the empirical loss estimator $\widehat{\mathcal{L}}_N(\theta;q)$ evaluated under each method's respective proposal distribution (i.e., the learned proposal $q^b_\alpha(x \mid t, \mu)$ for our method and the static domain-informed proposal $b(x \mid t)$ for AP-PINN). As shown in Figure~\ref{fig:total_loss}, the proposed method exhibits a substantially lower estimated population loss compared to AP-PINN. Conversely, the empirical loss evaluated on its own proposal is higher for our method. This confirms that the proposed adaptive proposal concentrates collocation points in high-residual regions during training, ultimately leading to superior generalization performance.


\begin{figure}[htbp]
  \centering
  \begin{subfigure}{\linewidth}
    \centering
    \includegraphics[width=\linewidth]{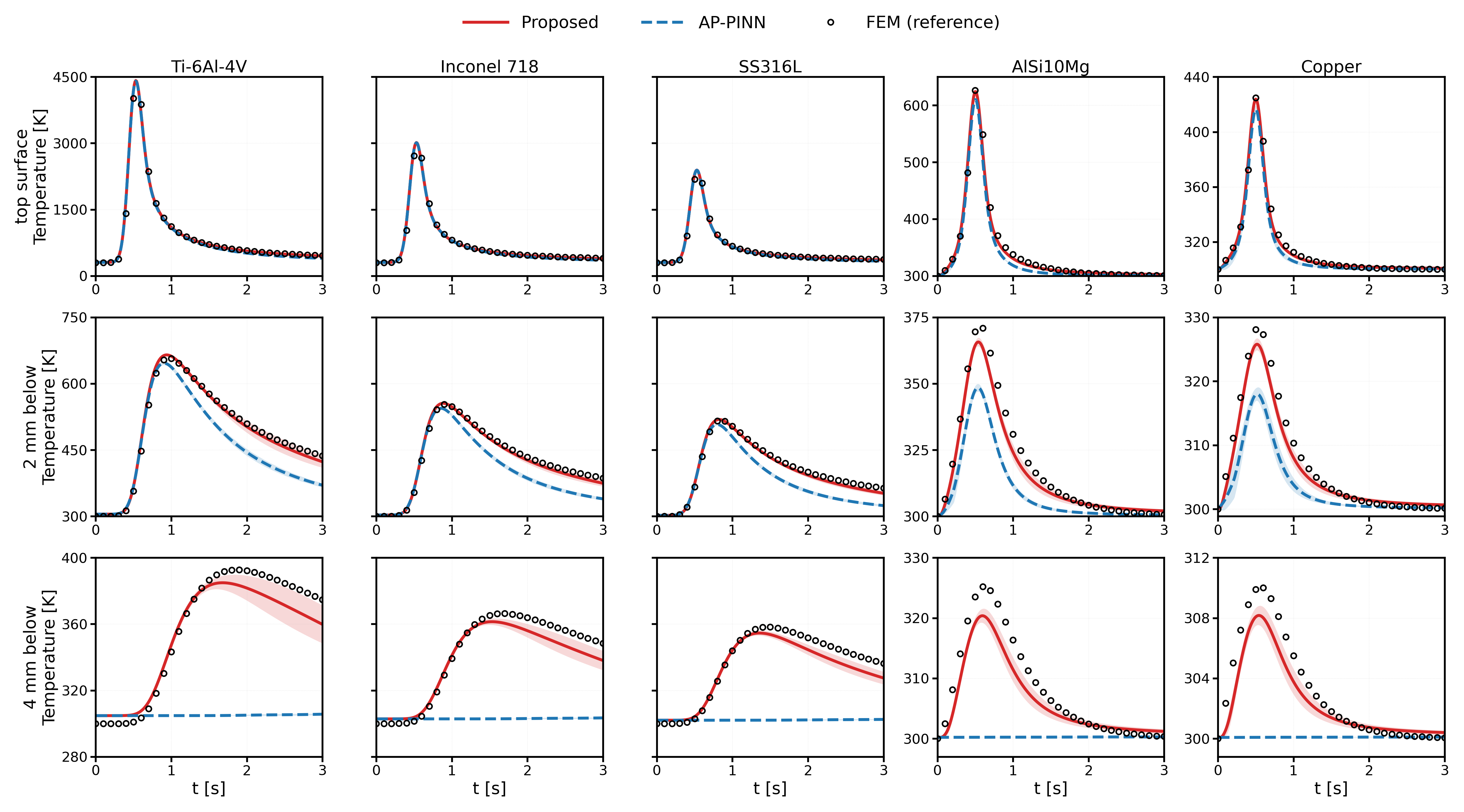}
    \caption{Thermal histories predicted by AP-PINN and ours}
    \label{fig:thermal_history_a}
  \end{subfigure}%
  \\[1ex]
  \begin{subfigure}{\linewidth}
    \centering
    \includegraphics[width=\linewidth]{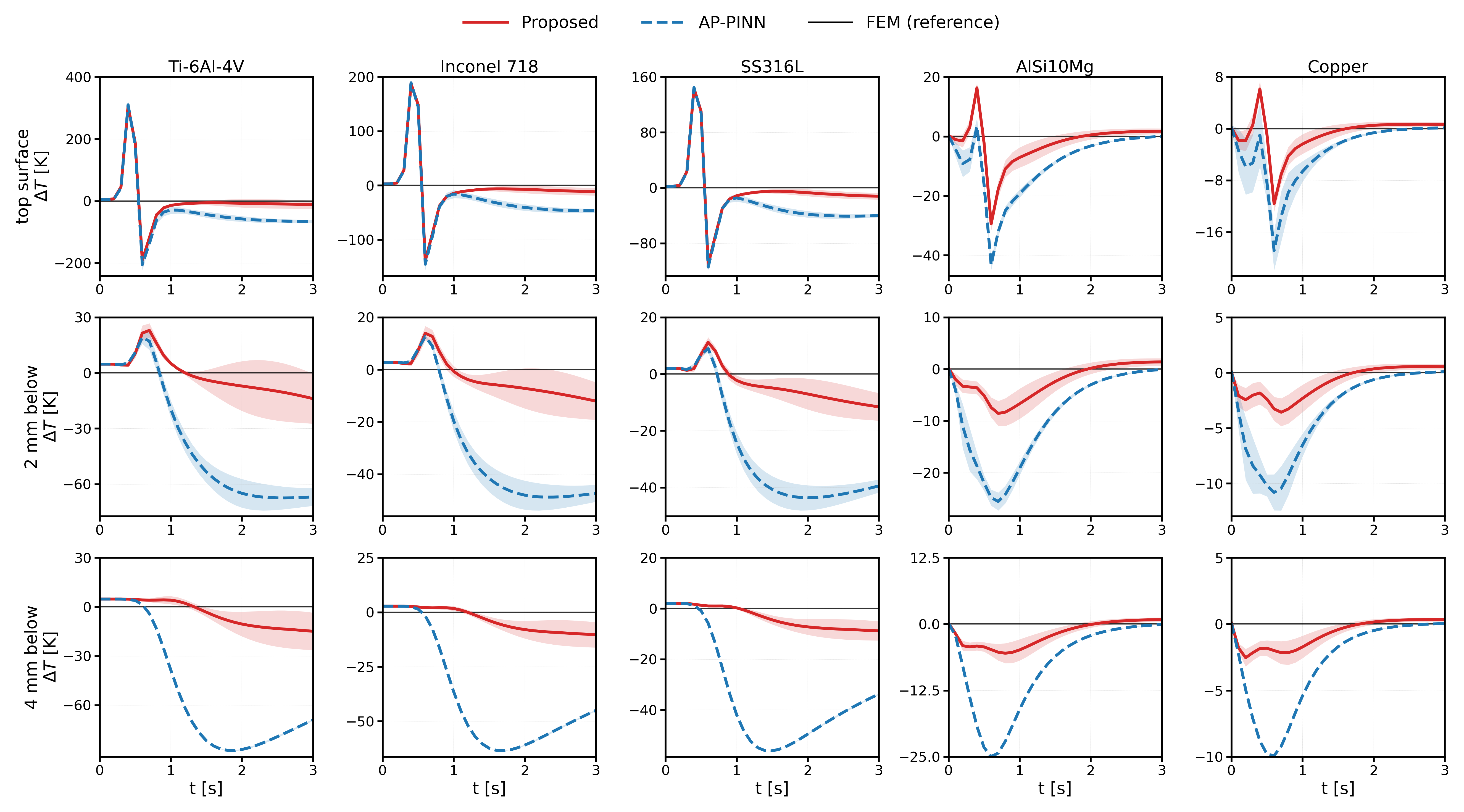}
    \caption{Deviations from FEM by AP-PINN and ours}
    \label{fig:thermal_history_b}
  \end{subfigure}
  \caption{Comparison of thermal histories at increasing depths below the laser scan track across five materials (columns) at the top surface, $2\,\mathrm{mm}$, and $4\,\mathrm{mm}$ depth (rows) (mean $\pm$ std over five seeds): (a) predicted temperature profiles and FEM reference and (b) temperature deviations $\Delta T = T_\theta - T$ relative to the FEM reference. Both methods match well at the top surface under direct laser scanning, but the AP-PINN baseline underperforms significantly in late-time and far-field regions where heat dissipation governs over laser irradiation. Ultimately, AP-PINN misses the subsurface thermal profiles entirely at $4\,\mathrm{mm}$ depth across all materials, whereas the proposed method accurately captures it.}
  \label{fig:thermal_history_combined}
\end{figure}

\paragraph{Thermal History Comparison}
To examine how these reductions in both residuals and relative $L_2$ error translate to temperature field predictions, we compare the thermal histories predicted by AP-PINN and the proposed method against the ground-truth FEM solutions in Figure~\ref{fig:thermal_history_a}, alongside their corresponding temperature deviations in Figure~\ref{fig:thermal_history_b}. This comparison is evaluated at three specific positions below the scan track ($x=10\,\mathrm{mm}$) to capture the distinct thermal behavior of each material: at the top surface ($z=0\,\mathrm{mm}$), $2\,\mathrm{mm}$ below, and $4\,\mathrm{mm}$ below the top surface (which corresponds to $2\text{ mm}$ upward from the bottom surface).

As shown in Figure~\ref{fig:thermal_history_a}, both methods closely match the FEM reference at the top surface. They both successfully capture the rapid transient thermal gradients induced by laser irradiation across all materials, as well as accounting for their distinct temperature magnitudes driven by varying thermophysical properties. At $2\,\mathrm{mm}$ depth, however, deviations begin to emerge as differences in thermophysical properties lead to distinct thermal conduction patterns. With increasing depth, the predictions diverge significantly. At $4\,\mathrm{mm}$ depth, AP-PINN predicts a constant ambient temperature that completely misses the subsurface thermal profile, whereas our method accurately captures both the peak magnitude and profile evolution.

A closer look at Figure~\ref{fig:thermal_history_b} reveals the primary advantages of our approach in capturing heat dissipation regions. For the materials with low thermal conductivity (i.e., Ti-6Al-4V, Inconel 718, and SS 316L), deviations grow more noticeable over time and at greater distances from the laser track.
Conversely, for AlSi10Mg and Copper with much higher thermal conductivity and thus higher thermal diffusivity ($ \frac{k_c}{\rho_c C_p}$), the performance gap appears early during laser irradiation on the top surface due to rapid heat dissipation in these highly conductive materials. As the heat absorbed from the laser irradiation quickly dissipates toward the subsurface due to high thermal diffusivity, the deviations between the two methods are notable at early times rather than at late times, unlike in the in-distribution materials with low thermal conductivity and low thermal diffusivity. Therefore, it can be confirmed that our adaptive collocation strategy successfully captures distinct, material-dependent thermal behaviors, with advantages particularly pronounced in heat-dissipating regions: at late times in the far field for low-conductivity materials, and across both early times and near-to-far fields for high-conductivity materials. Finally, the near-zero standard deviation of AP-PINN's predicted temperature further confirms that this failure stems from a structural limitation in its static collocation.

Despite the superiority of our method especially in the heat dissipation regions, both approaches exhibit notable deviations from the FEM reference under direct laser irradiation, particularly for low-conductivity materials. This discrepancy likely stems in part from the high-frequency nature of these regions, driven by the steep thermal gradients and spikes induced by laser irradiation. This aligns with the well-known spectral bias of neural networks, which is further amplified in PINNs because the residual involves high-order derivatives, making high-frequency regions notoriously difficult to resolve \citep{tancik2020fourier, wang2021eigenvector}. Because our problem formulation assumes a uniform temporal distribution of time steps, the sampling density remains uniform near these transient thermal peaks. Incorporating time steps into our adaptive sampling scheme could alleviate this discrepancy, though at the cost of additional optimization and computational complexity. A more fundamental solution may lie in alternative feature engineering of the spatiotemporal domain, for instance, mapping to the Fourier domain to better resolve high-frequency regions (as in Fourier Neural Operators (FNO) \citep{li2020fourier}), which we leave as a promising direction for future work.

\begin{figure}[htbp]
\begin{subfigure}{\linewidth}
  \includegraphics[width=1\textwidth]{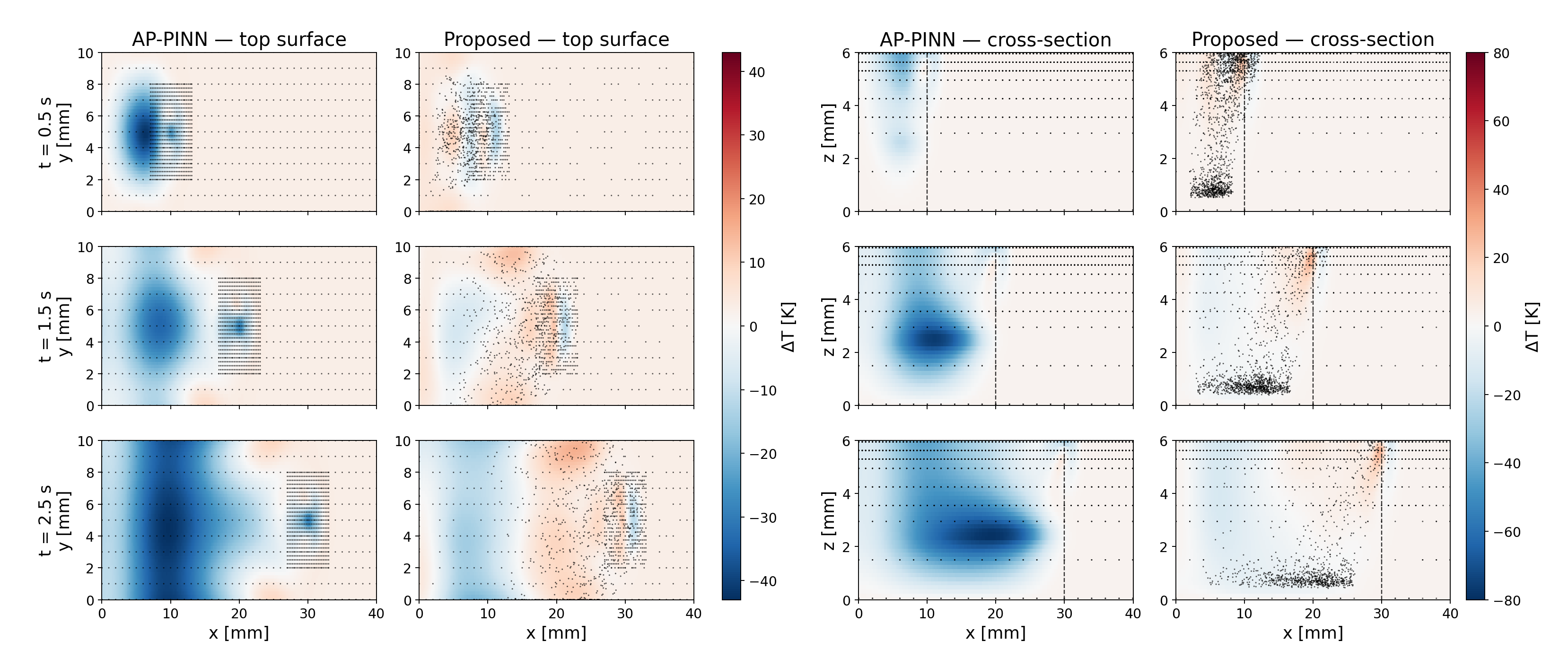}
  \caption{Inconel 718 ($k_c = 11.4$~W\,m$^{-1}$\,K$^{-1}$)}
  \label{fig:dev-in718}
\end{subfigure}
\begin{subfigure}{\linewidth}
  \includegraphics[width=1\textwidth]{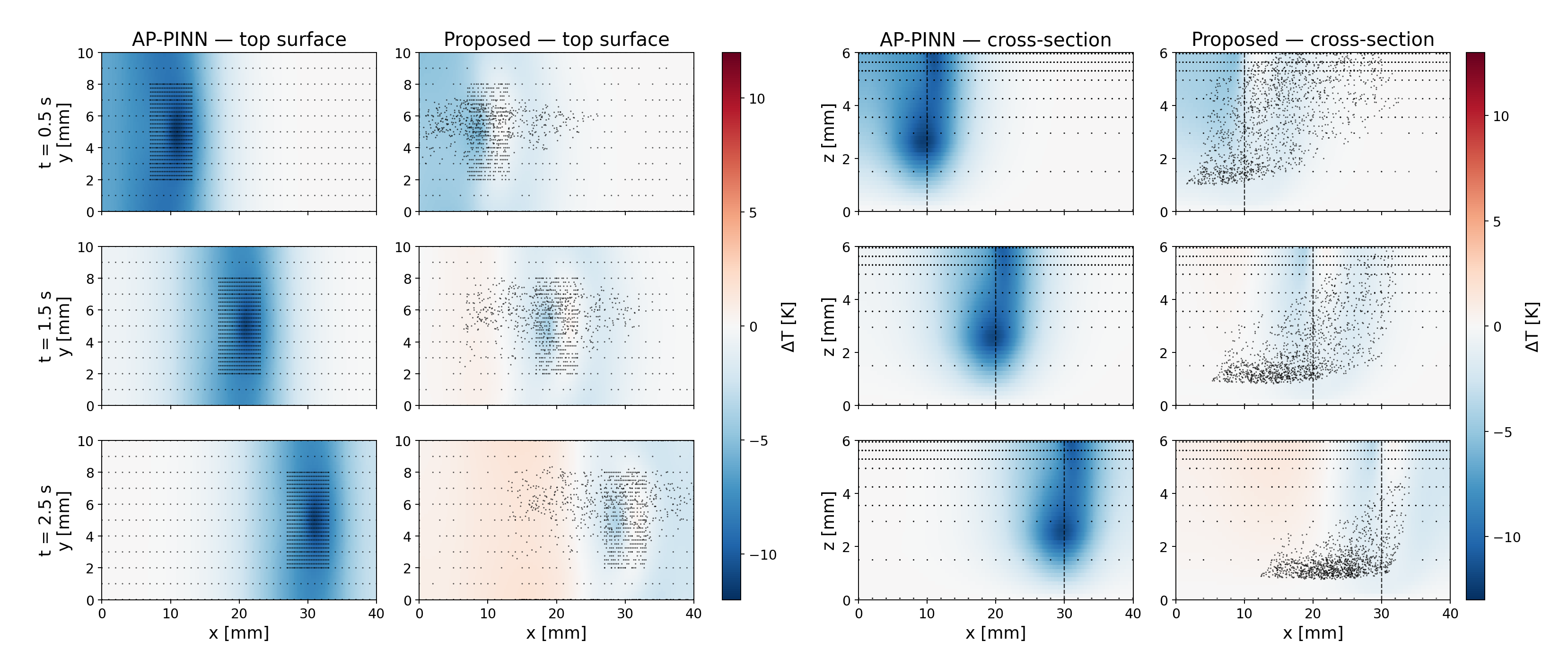}
  \caption{Copper ($k_c = 401$~W\,m$^{-1}$\,K$^{-1}$)}
  \label{fig:dev-cu}
\end{subfigure}
\caption{Temperature deviation $\Delta T = T_\theta - T$ at $t = 0.5$, $1.5$ and $2.5$~s (rows), with each method's collocation points overlaid. Top-surface panels show the top boundary, cross-section panels the domain group; the dashed line marks the laser center. Both the total number of collocation points are identical while AP-PINN draws them from $b(x\mid t) $ whereas the proposed method from the mixture $q_\alpha^{b} $ in \eqref{eq:mixture} with $\alpha = 0.6$. The collocation points drawn in AP-PINN are concentrated nearby the laser irradiation zone and the top surface, which results in significant discrepancy from the FEM reference at heat dissipation regions. In contrast, the proposed method effectively allocates the collocation points on both the laser irradiation zone and the heat dissipation regions at a given time step, resulting in considerable attenuation of the discrepancies shown in the AP-PINN baseline.}
\label{fig:dev-main}
\end{figure}

\paragraph{Qualitative Analysis}
Figure~\ref{fig:dev-main} compares the top-surface and cross-sectional temperature field deviations relative to the ground-truth FEM solutions for both the AP-PINN baseline and our proposed method at $t = 0.5$, $1.5$, and $2.5$\,s. Overlaid on the deviation fields are the collocation points sampled by each strategy: the conventional, manually crafted static sampling and our domain-informed adaptive sampling. Here, Inconel 718 and Copper are selected as representative low- and high-thermal-conductivity materials, respectively. Deviations for the remaining evaluation materials are provided in Appendix~\ref{app:dev}.

Notably, the proposed method substantially alleviates the deviation between the AP-PINN baseline and FEM reference in all panels. AP-PINN exhibits a growing region of under-prediction in heat dissipation regions for both materials and at every time step. For the low-conductivity material Inconel 718, at the early time step ($t=0.5$\,s), when the laser has only just begun irradiating the top surface, the absorbed heat remains concentrated near the melt pool, and the temperature deviations are similarly localized to this region. As the laser moves along the track and time passes, this failure to capture heat dissipation propagates along each material's heat dissipation path: toward the subsurface where heat accumulates for the low-conductivity Inconel 718, and nearly parallel to the laser track for the high-conductivity Copper. These consistent failures highlight a fundamental limitation of conventional, manually crafted static sampling, which concentrates points almost exclusively near the laser track to ensure stable convergence of PINNs in metal AM, leaving substantial room for improvement in capturing the rapid-cooling zones where heat dissipates.

Our domain-informed adaptive proposal addresses this limitation by augmenting the conventional sampling region with a learned proposal that accounts for the distinct thermal behaviors induced by process conditions, dynamically allocating points along the evolving heat dissipation pathways where errors would otherwise accumulate. Because these pathways vary considerably with thermophysical material properties, effective collocation requires more than sampling the melt pool along the laser track alone. By dynamically reallocating points to these fast-cooling, high-gradient regions, our method suppresses the deviations prominent in the baseline to within approximately $\pm 10$\,K across all time steps and all five evaluation materials, with progressively larger gains as heat continues to accumulate and dissipate over time.

\begin{figure}[tbh]
  \centering
  \includegraphics[width=\linewidth]{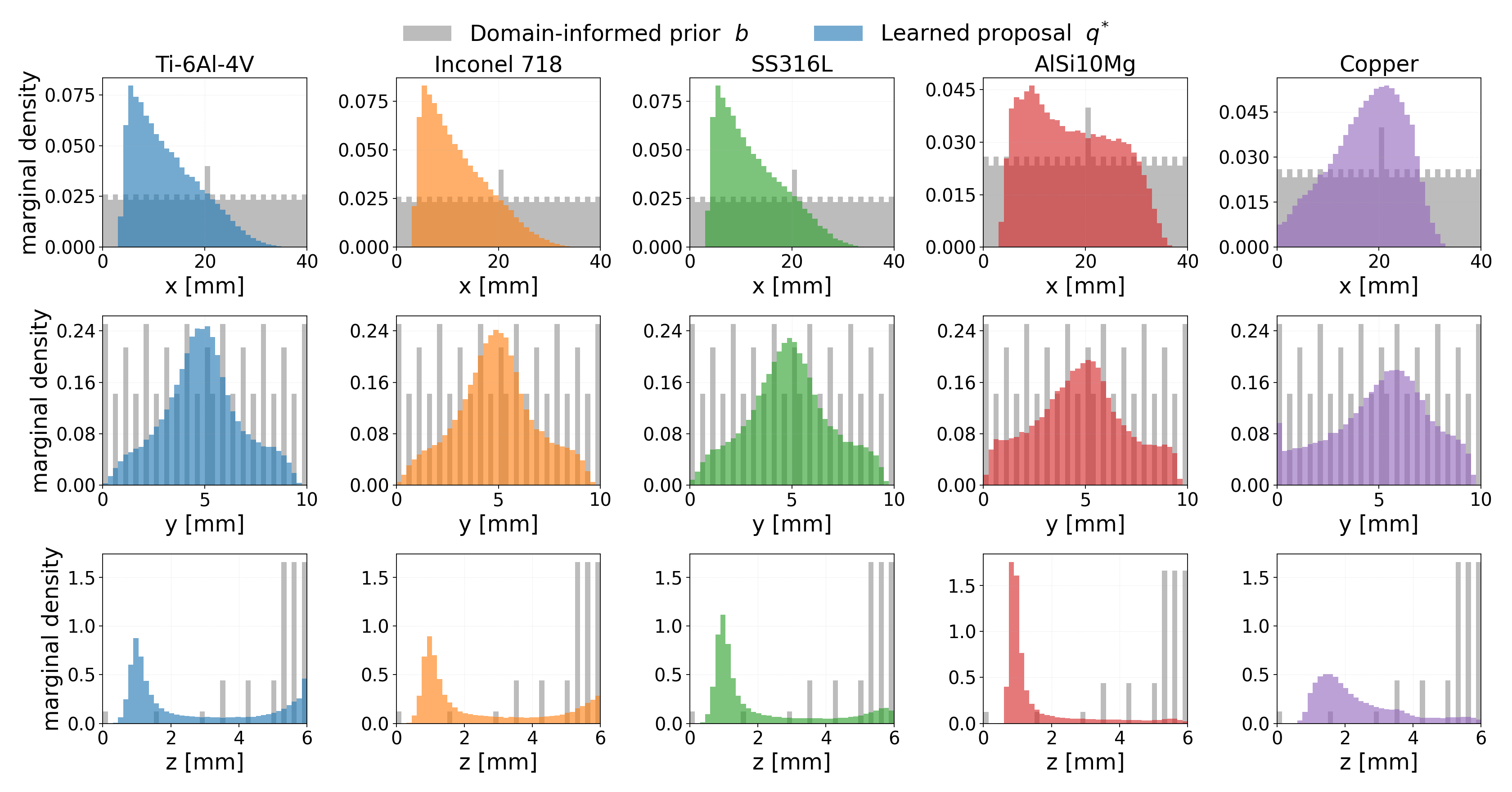}
  \caption{Marginal distributions of collocation points along each spatial axis, averaged over $t$, across the five evaluation materials.}
  \label{fig:marginals_per_material}
\end{figure}

\paragraph{Collocation Sampling Distribution Comparison}
Figure~\ref{fig:marginals_per_material} presents the marginal distributions along each spatial dimension where the conventional approach and our adaptive sampling proposal concentrate collocation points. The conventional, domain-informed yet static sampling strategy distributes points uniformly along the $x$- and $y$-dimensions, while along the $z$-dimension it concentrates points predominantly near the top surface with a coarser distribution elsewhere. In contrast, our proposed method samples more heavily in high-residual regions, particularly during early time steps and near the boundaries. This behavior aligns with the findings in Figure~\ref{fig:dev-main}: at an early time step ($t = 0.5$\,s), when heat from laser irradiation has not yet fully dissipated toward the boundaries, collocation points are concentrated primarily near the melt pool. As time progresses and the laser traverses the track, heat propagates throughout the domain, and high-residual regions accumulate in the far-field subsurface where heat dissipates rather than remaining confined to the vicinity of the laser track. Accordingly, the points generated by our adaptive proposal shift toward lower $z$-levels rather than remaining restricted to the top surface.

Most importantly, the distribution of these collocation points depends strongly on the thermophysical properties of each material, as evidenced by the varying marginal distributions across materials. As thermal conductivity increases, the marginal distributions along the $x$- and $y$-dimensions become smoother, reflecting the enhanced heat diffusivity that requires allocating more points farther from the laser track. Similarly, for the $z$-marginal distribution, lower-diffusivity materials retain points mainly near the top surface ($z = 6$\,mm), while higher thermal diffusivity drives the point concentration deeper toward the bottom, where heat accumulates and dissipates to form high-residual zones, a trend further corroborated by Figure~\ref{fig:dev-main}.

\paragraph{Ablation Study and Sensitivity Analysis}
To validate our domain-informed adaptive sampling strategy, we conduct an ablation study and sensitivity analysis on the mixture formulation $q_\alpha^{b}(x\mid t,\mu) := (1-\alpha)\,g_\phi(x\mid t,\mu) + \alpha\,b(x\mid t)$ by varying the mixture weight $\alpha \in [0,1]$ and comparing our domain-informed base distribution $b(x\mid t)$ against an uninformative uniform base.

\begin{figure}[thb]
\centering
\includegraphics[width=0.5\textwidth]{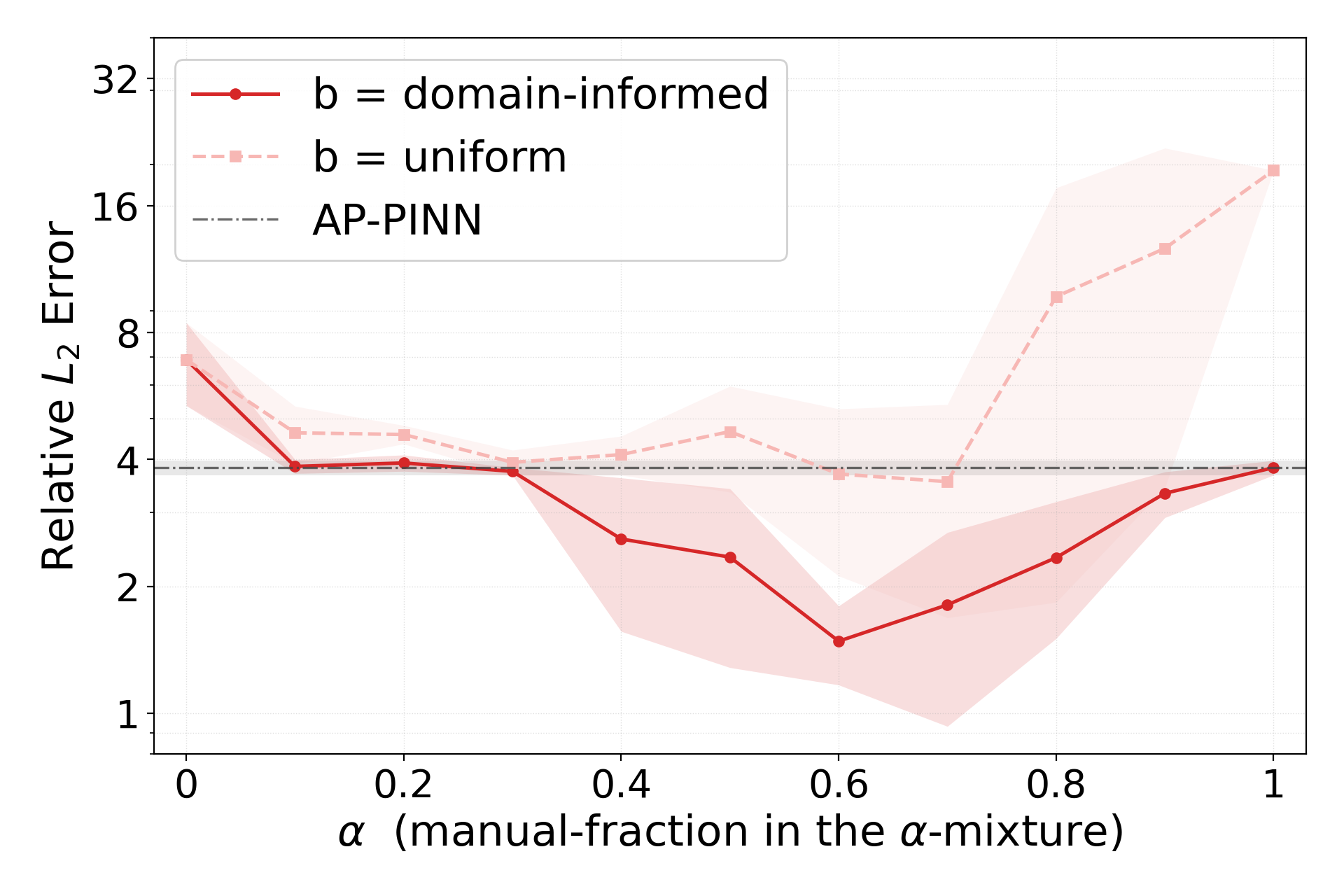}
\caption{Relative $L_2$ error (mean $\pm$ std over five seeds) across the mixture weight $\alpha$ for the two base distributions.}
\label{fig:alpha-sweep}
\end{figure}

As shown in Figure~\ref{fig:alpha-sweep}, the domain-informed base consistently outperforms the uniform base across nearly all values of $\alpha$. The proposed method achieves its lowest relative $L_2$ error on a broad plateau over $\alpha \in [0.5, 0.8]$, reaching a minimum at $\alpha = 0.6$ ($1.48 \pm 0.31\%$). This wide plateau highlights low sensitivity to $\alpha$, which stems from the complementary balance between the two sampling components: the learned proposal $g_\phi$ concentrates on heat-dissipating regions, while the base distribution ensures coverage of highly transient regions near the laser path. Conversely, setting $\alpha = 0$ (relying solely on the learned proposal) leads to degraded accuracy due to the lack of coverage in the transient regions.

Crucially, the base distribution governs training stability under the low-regularity conditions of metal AM. While the uniform base yields somewhat comparable performance for $\alpha \le 0.7$, its performance degrades rapidly at higher $\alpha$, resulting in frequent convergence failures ($\alpha=0.9$) and complete collapse ($\alpha=1$) due to cold-starting with uniformly sampled collocation points under steep thermal gradients. In contrast, the domain-informed base completely eliminates these bimodal failure modes as in previous research. This confirms that incorporating domain knowledge via the base distribution is essential to simultaneously achieve peak accuracy and guaranteed training stability.

\begin{figure}[thbp]
\centering
\begin{subfigure}{\textwidth}
  \includegraphics[width=\textwidth]{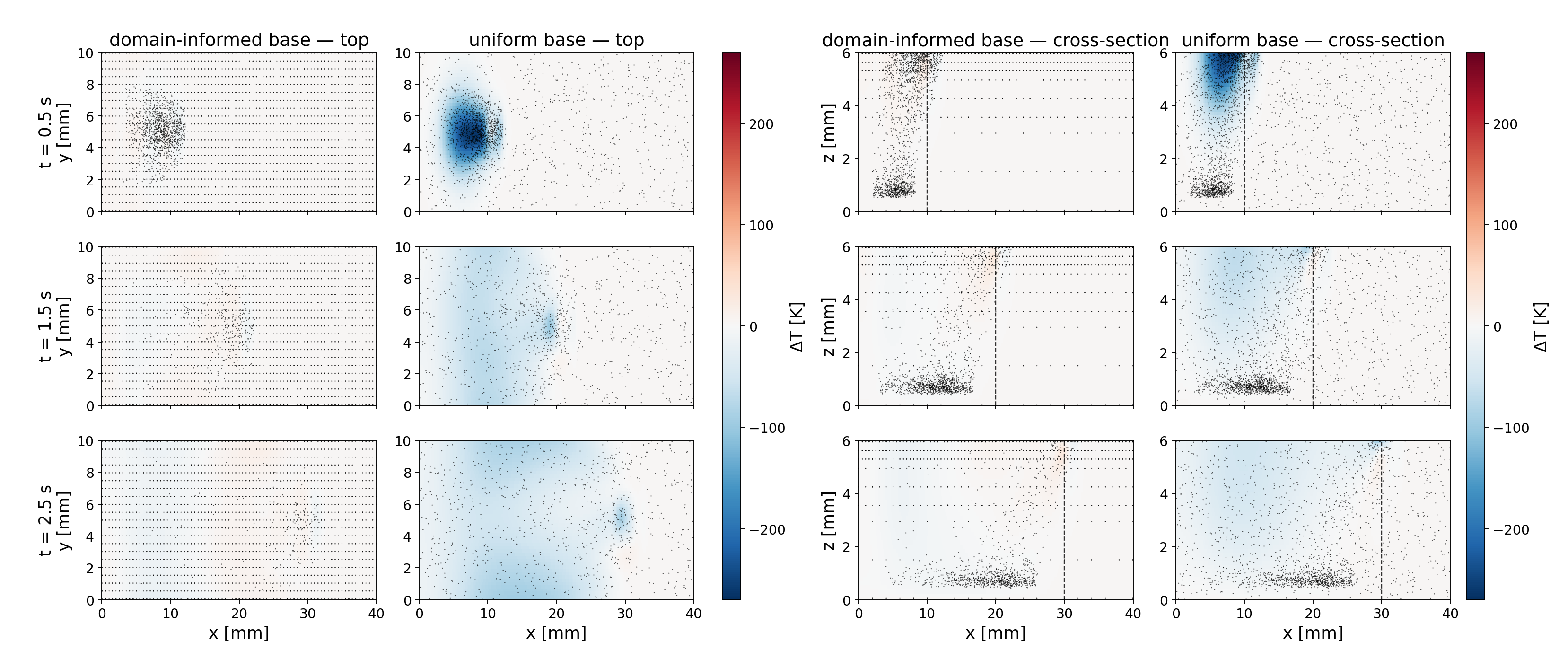}
  \caption{Inconel 718}
  \label{fig:base-dev-in718}
\end{subfigure}\\[4pt]
\begin{subfigure}{\textwidth}
  \includegraphics[width=\textwidth]{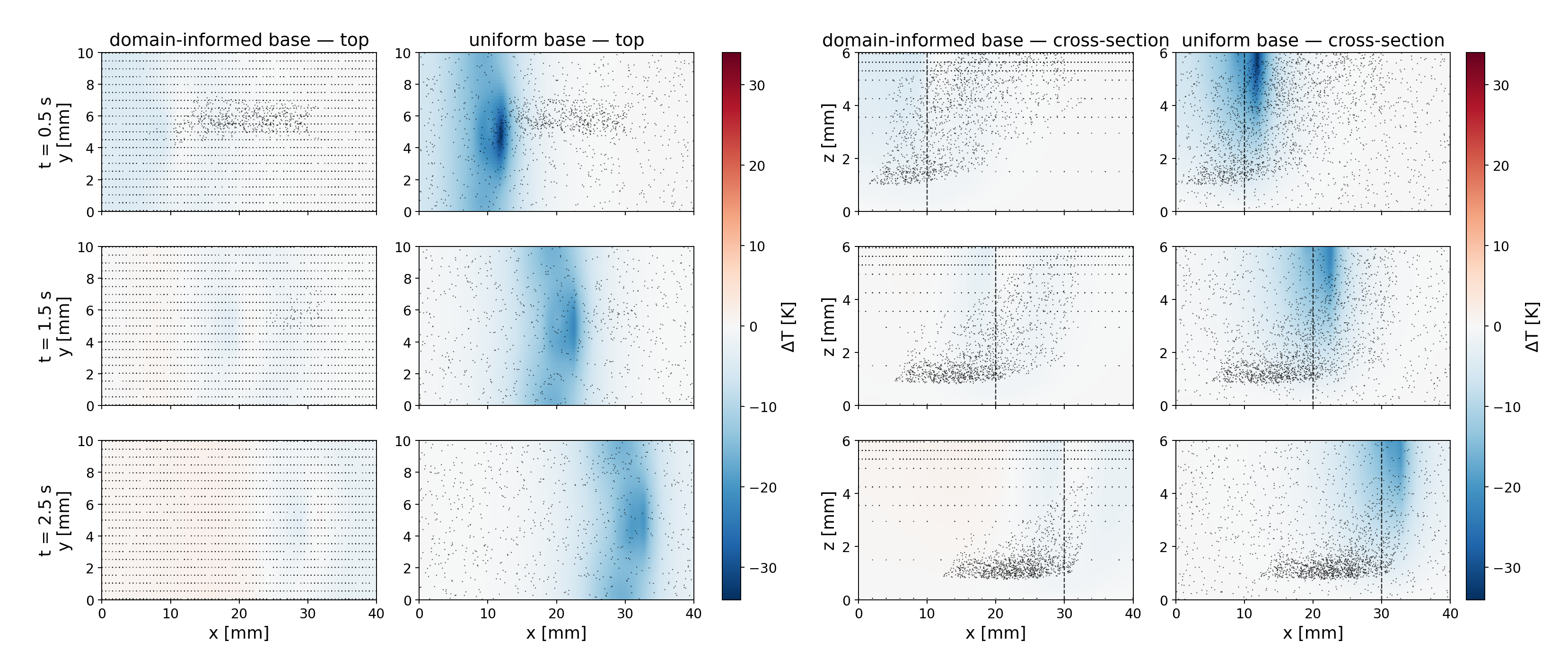}
  \caption{Copper}
  \label{fig:base-dev-cu}
\end{subfigure}
\caption{Signed deviation $\Delta T = T_{\text{PINN}} - T_{\text{FEM}}$ at $t = 0.5$, $1.5$, $2.5$~s for models trained with the domain-informed and the uniform base at the same $\alpha = 0.6$, with each configuration's collocation points overlaid.}
\label{fig:base-dev}
\end{figure}

To physically clarify this performance gap at $\alpha = 0.6$, Figure~\ref{fig:base-dev} compares the deviation fields under both base distributions. The domain-informed base places high sampling density within $1\,\mathrm{mm}$ of the laser track to ensure coverage of highly transient regions, whereas the uniform base severely undersamples this zone, leading to early-time underestimation near the heat source. This spatial undersampling manifests according to material properties: in low-conductivity Inconel~718, errors trail the moving beam along the traversed path, whereas in highly conductive copper ($\approx 35\times$ higher conductivity), errors travel dynamically beneath the laser. By embedding domain knowledge to guarantee coverage near steep thermal gradients, the domain-informed base eliminates these systematic errors, yielding nearly featureless deviation fields across all materials.

\paragraph{Overall discussion}

In summary, our experiments show that the proposed adaptive sampling strategy substantially outperforms both baselines, reducing AP-PINN's relative $L_2$ error by an average of $62.1\%$ and up to $70.1\%$ across all five materials. Despite higher training losses, the method achieves a lower estimated population loss by concentrating collocation points in high-residual regions, which translates into accurate subsurface and far-field thermal predictions where AP-PINN's static sampling fails. This adaptivity is material-dependent, dynamically shifting according to each material's thermophysical properties, and the ablation study confirms that incorporating metal AM domain knowledge is essential for both predictive accuracy and training stability.


While the proposed method shows considerable effectiveness and is, to our knowledge, the first adaptive collocation sampling method introduced in the PINN literature for metal AM, several limitations remain. First, the state-of-the-art baseline and ours still show noticeable deviations near the melt pool under direct laser irradiation due to the extremely high-frequency thermal gradients. Second, the necessity of the domain-informed base distribution $b(x\mid t)$ and the choice of mixture weight $\alpha$ are justified primarily through empirical observations rather than a principled theoretical account. Third, our ground-truth data are obtained from FEM simulations, a widely adopted numerical benchmark in this literature; nonetheless, validation against actual experiments would further strengthen confidence in the method's applicability. Finally, among process conditions we vary only the feedstock material, which aligns with our main objective since thermophysical properties reshape the residual landscape more strongly than process parameters such as laser power or scan speed; still, incorporating such parameters and varying laser scan paths would help establish broader applicability.

These limitations suggest several directions for future work. As discussed earlier, incorporating the temporal dimension into the adaptive sampling scheme, or adopting Fourier-based feature encoding, could better resolve high-frequency thermal transients near the laser track. Developing a theoretical account of the domain-informed base distribution, along with a principled procedure for selecting $\alpha$, would strengthen the method's foundations. Validating predicted thermal histories against actual experiments would further test its applicability beyond FEM-based benchmarks. Finally, extending the framework to jointly vary process parameters and laser scan paths, in addition to feedstock materials, would help establish broader applicability across the full space of AM process conditions.

\section{Conclusion}\label{sec:conclusion}

This work presents, to the best of our knowledge, the first adaptive collocation sampling strategy for PINNs in metal AM. We theoretically show that the generalization error of parametric PINNs is governed by the variance of the empirical loss estimator under the sampling proposal, and that any static, $\mu$-independent proposal faces a strict variance ceiling, making it strictly less effective than adaptive sampling that accounts for process conditions. Guided by this analysis, we adopt Conditional Flow Matching so that a single amortized generator produces the near-optimal proposal for any process condition in one forward pass, and integrate metal AM domain knowledge by mixing its samples with a domain-informed base distribution for training stability.

Extensive experiments on a widely adopted metal AM benchmark demonstrate that the proposed method reduces the relative $L_2$ error of the state-of-the-art parametric PINN baseline by an average of $62.1\%$ and up to $70.1\%$, with less than $1\%$ additional training overhead. Analysis of the empirical and the estimated population loss, thermal history predictions, and qualitative deviation fields further reveals that this improvement stems from the ability of our adaptive proposal to dynamically reallocate collocation points toward heat dissipation regions, which conventional sampling overlooks and therefore fails to capture at increasing depth and distance from the laser path. The accompanying ablation and sensitivity analysis confirms that the domain-informed base distribution is essential for training stability, complementing the learned proposal to jointly achieve peak accuracy and robust convergence.

Taken together, our theoretical and empirical results establish that adaptive, process condition-aware collocation sampling is critical to realizing the generalization potential of parametric PINNs in metal AM. Building on the limitations and future directions discussed above, we hope this work encourages a shift from static, manually crafted collocation strategies toward theoretically grounded, adaptive sampling frameworks for PINNs in metal AM and beyond.

\appendix
\section{Appendix}
\subsection{Nomenclature}\label{app:nomenclature}

\resizebox{\linewidth}{!}{%
\begin{tabular}{llc}
\toprule
Symbol & Meaning & Unit \\
\midrule
\multicolumn{3}{l}{\textit{General notation}} \\[2pt]
$(x, t)$ & spatiotemporal collocation point & \\
$\Omega,\ \Omega_{x},\ t_{\mathrm{end}}$ & spatiotemporal / spatial domain; simulated horizon & \\
$\mu,\ \mathcal{P}$ & process condition $\mu = (\rho_c, C_p, k_c)$; its (compact) parameter space & \\
$p$ & target density (uniform over $\Omega\times\mathcal{P}$) & \\
$q,\ q^\star_\theta,\ b,\ q_\alpha^{b}$ & general proposal; variance-optimal proposal; domain-informed base; mixture & \\
$g_\phi$ & learned CFM generator approximating $q^\star_{\theta_{\mathrm{src}}}$ & \\
$\alpha$ & mixture weight & \\
$f_\theta$ & predictor network with learnable parameters $\theta$ & \\
$r_{\theta,\mathrm{pde}},\ r_{\theta,\mathrm{bc}},\ r_{\theta,\mathrm{ic}}$ & PDE / BC / IC residuals & \\
$\lambda_{\mathrm{pde}},\ \lambda_{\mathrm{bc}},\ \lambda_{\mathrm{ic}},\ R$ & residual balancing weights; residual bound & \\
$\mathcal{L}(\theta),\ \widehat{\mathcal{L}}_N(\theta;q)$ & population loss; empirical loss estimator & \\
$M_q,\ \sigma_q^2(\theta),\ \bar\sigma_q^2$ & likelihood-ratio bound; estimator variance; its supremum over $\Theta$ & \\
$V_{\mathrm{c}}^\star,\ V_{\mathrm{nc}}^\star$ & minimal variance of conditional / unconditional proposal families & \\
$\tau,\ v_\phi,\ z_\tau,\ \ell$ & flow-matching time; velocity field; flow state; residual-group identifier & \\
$\varrho_\theta$ & normalized spatial shape of the squared residual (Appendix~\ref{prf:rem-strict-sep}) & \\
$N,\ G,\ M_{\mathrm{res}},\ B,\ J,\ N_{\mathrm{ODE}}$ & collocation batch; pool size; resample size; CFM batch; stored conditions; Euler steps & \\
$K_{\mathrm{warm}},\ K_{\mathrm{flow}},\ K_{\mathrm{PINN}},\ \theta_{\mathrm{src}}$ & stage iteration counts; warm-start (source) predictor parameters & \\[2pt]
\midrule
\multicolumn{3}{l}{\textit{Physical quantities}} \\[2pt]
$x$ & spatial coordinate & $\mathrm{m}$ \\
$t$ & physical time & $\mathrm{s}$ \\
$T,\ T_\theta$ & ground-truth / predicted temperature field & $\mathrm{K}$ \\
$T_0$ & ambient (initial) temperature & $\mathrm{K}$ \\
$\rho_c$ & density & $\mathrm{kg/m^3}$ \\
$C_p$ & specific heat capacity & $\mathrm{J/(kg\cdot K)}$ \\
$k_c$ & thermal conductivity & $\mathrm{W/(m\cdot K)}$ \\
$P$ & laser power & $\mathrm{W}$ \\
$\eta$ & laser absorptivity & -- \\
$R_{\mathrm{laser}}$ & laser beam radius & $\mathrm{m}$ \\
$x_{\mathrm{laser}}(t),\ \mathbf{v},\ v$ & laser center position; scan velocity; scan speed & $\mathrm{m}$; $\mathrm{m/s}$; $\mathrm{m/s}$ \\
$h$ & convective heat transfer coefficient & $\mathrm{W/(m^2\cdot K)}$ \\
$e$ & surface emissivity & -- \\
$\sigma$ & Stefan--Boltzmann constant & $\mathrm{W/(m^2\cdot K^4)}$ \\
$q_{\mathrm{laser}},\ q_{\mathrm{conv}},\ q_{\mathrm{rad}}$ & laser / convective / radiative heat flux & $\mathrm{W/m^2}$ \\
\bottomrule
\end{tabular}%
}

\subsection{Governing Equations, Boundary Conditions, and Residuals}
\label{app:governing_eqs}

The spatial domain is $\Omega_{x}\subset\mathbb{R}^{d_{x}}$ with $d_{x}=3$, the time interval is $[0,t_{\mathrm{end}}]$, and $\Omega=\Omega_{x}\times[0,t_{\mathrm{end}}]$. The PDE parameter vector collects the three thermophysical material properties that are varied in our experiments \( \mu=(\rho_c,\,C_p,\,k_c)\in\mathcal{P}\subset\mathbb{R}^{m}\), namely the density $\rho_c$, the specific heat capacity $C_p$, and the thermal conductivity $k_c$. All remaining quantities appearing below, including the process parameters and the surface exchange coefficients, are held fixed.

\paragraph{Governing PDE.}
Transient heat transfer in the bare-plate metal AM process is governed by conservation of
energy coupled with Fourier's law of heat conduction. The exact temperature field
$T(x,t;\mu)$ satisfies
\begin{equation}\label{eq:app_pde}
\mathcal{N}[T](x,t;\mu)
:= \rho_c\,C_p\,\frac{\partial T}{\partial t}
- \nabla\!\cdot\!\bigl(k_c\,\nabla T\bigr) = 0,
\qquad (x,t)\in\Omega_{x}\times(0,t_{\mathrm{end}}],
\end{equation}
where $\mathcal{N}[\cdot]$ is the nonlinear differential operator introduced in
Sec.~\ref{sec:setup}; the nonlinearity enters through the radiative boundary term below.

\paragraph{Boundary conditions.}
The boundary $\partial\Omega_{x}$ is decomposed into the top surface
$\Gamma_{\mathrm{top}}$, the side surfaces $\Gamma_{\mathrm{side}}$, and the bottom surface
$\Gamma_{\mathrm{bottom}}$, so that
$\partial\Omega_{x}=\Gamma_{\mathrm{top}}\cup\Gamma_{\mathrm{side}}\cup\Gamma_{\mathrm{bottom}}$.
With $\mathbf{n}$ the outward unit normal, the boundary operator
$\mathcal{B}[\cdot]$ is defined piecewise as
\begin{equation}\label{eq:app_bc}
\mathcal{B}[T](x,t;\mu):=
\begin{cases}
-k_c\,\nabla T\cdot\mathbf{n}
-\bigl(q_{\mathrm{laser}}+q_{\mathrm{conv}}+q_{\mathrm{rad}}\bigr),
& x\in\Gamma_{\mathrm{top}},\\[4pt]
-k_c\,\nabla T\cdot\mathbf{n}
-\bigl(q_{\mathrm{conv}}+q_{\mathrm{rad}}\bigr),
& x\in\Gamma_{\mathrm{side}},\\[4pt]
T(x,t;\mu)-T_0,
& x\in\Gamma_{\mathrm{bottom}},
\end{cases}
\end{equation}
for $t\in(0,t_{\mathrm{end}}]$, and the boundary conditions read
$\mathcal{B}[T](x,t;\mu)=0$ on $\partial\Omega_{x}\times(0,t_{\mathrm{end}}]$.

The laser heat flux imposed on the top surface is modeled as a Gaussian surface heat source,
\begin{equation}\label{eq:app_laser}
q_{\mathrm{laser}}(x,t)
= \frac{2\,\eta\,P}{\pi R_{\mathrm{laser}}^{2}}
\exp\!\left(
-\frac{2\,\lVert x-x_{\mathrm{laser}}(t)\rVert^{2}}{R_{\mathrm{laser}}^{2}}
\right),
\end{equation}
where $\eta$ is the laser absorptivity, $P$ the laser power, and $R_{\mathrm{laser}}$ the
laser beam radius. The laser center follows the prescribed scan path
$x_{\mathrm{laser}}(t)=x_0+\mathbf{v}\,t$; a one-dimensional scan along the
$x$-axis on the top surface is considered, i.e., $\mathbf{v}=(v,0,0)$.
Convective and radiative fluxes act on all surfaces except
$\Gamma_{\mathrm{bottom}}$, where a fixed-temperature Dirichlet condition is imposed:
\begin{align}
q_{\mathrm{conv}}(x,t;\mu) &= h\bigl(T(x,t;\mu)-T_0\bigr),
\label{eq:app_conv}\\
q_{\mathrm{rad}}(x,t;\mu)  &= \sigma\,e\bigl(T(x,t;\mu)^{4}-T_0^{4}\bigr),
\label{eq:app_rad}
\end{align}
with $h$ the convective heat transfer coefficient, $\sigma$ the Stefan--Boltzmann constant,
and $e$ the surface emissivity. Note that $q_{\mathrm{laser}}$ is independent of $\mu$,
whereas $q_{\mathrm{conv}}$ and $q_{\mathrm{rad}}$ inherit the parameter dependence of the
temperature field. The quartic term in \eqref{eq:app_rad} is the source of the nonlinearity
of the boundary value problem.

\paragraph{Initial condition.}
\begin{equation}\label{eq:app_ic}
\mathcal{I}[T](x;\mu) := T(x,0;\mu) - T_0 = 0,
\qquad x\in\Omega_{x},
\end{equation}
with $T_0$ the ambient temperature.

\paragraph{Residuals.}
Substituting the network approximation $T_\theta(x,t;\mu)$ into
\eqref{eq:app_pde}--\eqref{eq:app_bc} yields the three residual terms appearing in
Eq.~\eqref{eq:total_residual}:
\begin{align}
r_{\theta,\mathrm{pde}}(x,t;\mu)
&:= \mathcal{N}[T_\theta](x,t;\mu)
= \rho_c C_p \frac{\partial T_\theta}{\partial t}
- \nabla\!\cdot\!\bigl(k_c \nabla T_\theta\bigr),
&& (x,t)\in\Omega_{x}\times(0,t_{\mathrm{end}}],
\label{eq:app_r_pde}\\[4pt]
r_{\theta,\mathrm{bc}}(x,t;\mu)
&:= \mathcal{B}[T_\theta](x,t;\mu),
&& (x,t)\in\partial\Omega_{x}\times(0,t_{\mathrm{end}}],
\label{eq:app_r_bc}\\[4pt]
r_{\theta,\mathrm{ic}}(x;\mu)
&:= \mathcal{I}[T_\theta](x;\mu) = T_\theta(x,0;\mu)-T_0,
&& x\in\Omega_{x}.
\label{eq:app_r_ic}
\end{align}
Explicitly, the boundary residual $r_{\theta,\mathrm{bc}}$ combines a Neumann-type residual on
$\Gamma_{\mathrm{top}}\cup\Gamma_{\mathrm{side}}$ and a Dirichlet-type residual on
$\Gamma_{\mathrm{bottom}}$:
\begin{equation}\label{eq:app_r_bc_explicit}
r_{\theta,\mathrm{bc}}=
\begin{cases}
-k_c\,\nabla T_\theta\cdot\mathbf{n}
-q_{\mathrm{laser}}-q_{\mathrm{conv}}[T_\theta]-q_{\mathrm{rad}}[T_\theta],
& x\in\Gamma_{\mathrm{top}},\\[3pt]
-k_c\,\nabla T_\theta\cdot\mathbf{n}
-q_{\mathrm{conv}}[T_\theta]-q_{\mathrm{rad}}[T_\theta],
& x\in\Gamma_{\mathrm{side}},\\[3pt]
T_\theta(x,t;\mu)-T_0,
& x\in\Gamma_{\mathrm{bottom}}.
\end{cases}
\end{equation}
Each residual is evaluated on collocation points drawn from the corresponding subdomain, and
the three terms are combined with the balancing weights
$\lambda_{\mathrm{pde}},\lambda_{\mathrm{bc}},\lambda_{\mathrm{ic}}$ as in
Eq.~\eqref{eq:total_residual}. Note that $r_{\theta,\mathrm{ic}}$ is independent of $t$ since
it is evaluated at $t=0$, and it depends on $\mu$ only through $T_\theta$ itself.

\subsection{Proof of Excess Generalization Error Decomposition Eq.~\eqref{eq:uniform-dev}}\label{prf:excgen}

We derive the decomposition of the excess generalization error given in \eqref{eq:uniform-dev} based on the triangular inequality in statistical learning theory. Recall that $\theta^\star \in \arg\min_{\theta \in \Theta} \mathcal{L}(\theta)$ is the population loss minimizer, $\widehat\theta_N(q) \in \arg\min_{\theta \in \Theta} \widehat{\mathcal{L}}_N(\theta;q)$ is the empirical loss minimizer under proposal $q$, and $\widetilde\theta_N$ is the parameter returned after training. The excess generalization error is decomposed as follows:
\begin{align}
\underbrace{\mathcal{L}(\widetilde\theta_N) - \mathcal{L}(\theta^\star)}_{\text{excess generalization error}}
&= \big[ \mathcal{L}(\widetilde\theta_N) - \widehat{\mathcal{L}}_N(\widetilde\theta_N; q) \big] 
+ \big[ \widehat{\mathcal{L}}_N(\widetilde\theta_N; q) - \widehat{\mathcal{L}}_N(\widehat\theta_N(q); q) \big] \nonumber \\
&\quad + \big[ \widehat{\mathcal{L}}_N(\widehat\theta_N(q); q) - \widehat{\mathcal{L}}_N(\theta^\star; q) \big] 
+ \big[ \widehat{\mathcal{L}}_N(\theta^\star; q) - \mathcal{L}(\theta^\star) \big].
\label{eq:decomp_full}
\end{align}

\noindent Since $\widehat\theta_N(q)$ is the minimizer of $\widehat{\mathcal{L}}_N(\cdot;q)$ over $\Theta$, and $\theta^\star \in \Theta$, we have $\widehat{\mathcal{L}}_N(\widehat\theta_N(q); q) \le \widehat{\mathcal{L}}_N(\theta^\star; q)$, which implies:
\begin{equation}
\widehat{\mathcal{L}}_N(\widehat\theta_N(q); q) - \widehat{\mathcal{L}}_N(\theta^\star; q) \;\le\; 0.
\end{equation}

\noindent Applying this inequality to \eqref{eq:decomp_full}, we obtain:
\begin{align}
\underbrace{\mathcal{L}(\widetilde\theta_N) - \mathcal{L}(\theta^\star)}_{\text{excess generalization error}} 
&\le \big[ \widehat{\mathcal{L}}_N(\widetilde\theta_N; q) - \widehat{\mathcal{L}}_N(\widehat\theta_N(q); q) \big] 
+ \big[ \mathcal{L}(\widetilde\theta_N) - \widehat{\mathcal{L}}_N(\widetilde\theta_N; q) \big] 
+ \big[ \widehat{\mathcal{L}}_N(\theta^\star; q) - \mathcal{L}(\theta^\star) \big].
\end{align}

\noindent Finally, bounding the last two terms by their supremum over the hypothesis class $\Theta$, we get:
\begin{align}
\underbrace{\mathcal{L}(\widetilde\theta_N) - \mathcal{L}(\theta^\star)}_{\text{excess generalization error}} 
&\le \big[ \widehat{\mathcal{L}}_N(\widetilde\theta_N; q) - \widehat{\mathcal{L}}_N(\widehat\theta_N(q); q) \big] 
+ \sup_{\theta \in \Theta} \big| \widehat{\mathcal{L}}_N(\theta; q) - \mathcal{L}(\theta) \big| 
+ \sup_{\theta \in \Theta} \big| \widehat{\mathcal{L}}_N(\theta; q) - \mathcal{L}(\theta) \big| \nonumber \\
&= \underbrace{\big[ \widehat{\mathcal{L}}_N(\widetilde\theta_N; q) - \widehat{\mathcal{L}}_N(\widehat\theta_N(q); q) \big]}_{\text{optimization error}} 
+ 2 \, \sup_{\theta \in \Theta} \underbrace{\big| \widehat{\mathcal{L}}_N(\theta; q) - \mathcal{L}(\theta) \big|}_{\text{statistical error}}
\end{align} \qed

\subsection{Proof of Proposition~\ref{prop:genbound}}\label{prf:genbound}

Prior to the proofs hereafter, we summarize the structural assumptions used throughout our analysis:

\begin{assumption}[Regularity]\label{asm:reg}
The target density $p$ is a probability density on $\Omega \times \mathcal{P}$ and strictly positive almost everywhere on its support.
\end{assumption}

\begin{assumption}[Residual Boundedness]\label{asm:bound}
There exists a constant $R < \infty$ such that $\sup_{\theta \in \Theta} |r_\theta(x,t;\mu)| \leq R$ almost everywhere on $\Omega \times \mathcal{P}$.
\end{assumption}

\begin{assumption}[Proposal Compatibility]\label{asm:supp}
The proposal density $q$ satisfies $\mathrm{supp}(p) \subseteq \mathrm{supp}(q)$, and the likelihood ratio is bounded almost everywhere, i.e., $M_q := \mathrm{ess\,sup}_q \frac{p(x,t,\mu)}{q(x,t,\mu)} = \mathrm{ess\,sup}_q \frac{p(x)}{q(x\mid t,\mu)} < \infty$.
\end{assumption}

\noindent Building on these assumptions, we state and prove the lemma regarding the variance of the empirical loss estimator:

\begin{lemma}[Variance of the empirical loss]\label{lem:variance}
Under Assumptions~\ref{asm:reg}--\ref{asm:supp}, for every $\theta \in \Theta$, the variance of the empirical loss estimator $\widehat{\mathcal{L}}_N(\theta;q)$ satisfies:
\begin{equation}\label{eq:var-decomp}
\mathrm{Var}_q\!\left[\widehat{\mathcal{L}}_N(\theta;q)\right]
= \frac{1}{N}\,\sigma_q^2(\theta),
\qquad
\sigma_q^2(\theta)
:= \mathbb{E}_q\!\left[\left(\frac{p(x)}{q(x \mid t,\mu)}\,r_\theta(x,t;\mu)^2\right)^{\!2}\,\right] - \mathcal{L}(\theta)^2.
\end{equation}
\end{lemma}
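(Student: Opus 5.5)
The plan is to reduce the statement to the variance of a single i.i.d. summand. Define, for a collocation point $(x,t,\mu)\sim q(x\mid t,\mu)\,p(t)\,p(\mu)$, the importance-weighted residual
\[
Z_\theta(x,t,\mu) := \frac{p(x)}{q(x\mid t,\mu)}\, r_\theta(x,t;\mu)^2 ,
\]
so that $\widehat{\mathcal{L}}_N(\theta;q)=\frac1N\sum_{i=1}^N Z_\theta(x_i,t_i,\mu_i)$ by \eqref{eq:Lemp_estimator}.

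The first step is to check that $Z_\theta$ is square-integrable, so that every quantity below is finite. By Assumption~\ref{asm:bound}, $|r_\theta|^2\le R^2$ almost everywhere. By Assumption~\ref{asm:supp}, the likelihood ratio is bounded by $M_q$ $q$-almost everywhere. Hence $0\le Z_\theta\le M_q R^2$ $q$-a.s., and both $\mathbb{E}_q[Z_\theta]$ and $\mathbb{E}_q[Z_\theta^2]$ are finite.

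The second step is unbiasedness. Using the factorization $q(x,t,\mu)=q(x\mid t,\mu)p(t)p(\mu)$ together with $p(x,t,\mu)=p(x)p(t)p(\mu)$, the densities $q(x\mid t,\mu)$ cancel inside the integral:
\[
\mathbb{E}_q[Z_\theta]=\int \frac{p(x)}{q(x\mid t,\mu)}\,r_\theta^2\,q(x\mid t,\mu)p(t)p(\mu)\,dx\,dt\,d\mu=\mathcal{L}(\theta).
\]
Two facts justify this cancellation. The inclusion $\mathrm{supp}(p)\subseteq\mathrm{supp}(q)$ from Assumption~\ref{asm:supp} ensures that no $p$-mass is lost on the set where $q=0$. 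Assumption~\ref{asm:reg} ensures that $p$ is a genuine density, so that $\mathcal{L}(\theta)$ in \eqref{eq:Lpop} is well defined.

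The third step uses independence. Since the $N$ points are i.i.d. from $q$, the variance of the average is
\[
\mathrm{Var}_q\bigl[\widehat{\mathcal{L}}_N\bigr]=\frac{1}{N}\,\mathrm{Var}_q[Z_\theta]=\frac1N\Bigl(\mathbb{E}_q[Z_\theta^2]-\mathcal{L}(\theta)^2\Bigr).
\]
The bracketed term is exactly $\sigma_q^2(\theta)$ in \eqref{eq:var-decomp}. This also agrees with the definition $\sigma_q^2(\theta)=N\,\mathrm{Var}_q[\widehat{\mathcal{L}}_N]$ used in Proposition~\ref{prop:genbound}.

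The argument is essentially routine. The only point needing care is the second step: the support and measurability bookkeeping that justifies the cancellation of $q(x\mid t,\mu)$, and the requirement that the marginals of $t$ and $\mu$ coincide under $p$ and $q$. The factorization assumed in Section~\ref{sec:setup} supplies that requirement directly.
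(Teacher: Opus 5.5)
Your proof is correct and follows the same route as the paper's: you write $\widehat{\mathcal{L}}_N(\theta;q)$ as an average of i.i.d.\ importance-weighted residuals, use independence to get $1/N$ times the single-sample variance, and use the factorization $q(x,t,\mu)=q(x\mid t,\mu)p(t)p(\mu)$ to cancel $q(x\mid t,\mu)$ and identify the mean as $\mathcal{L}(\theta)$. Your explicit check that $0\le Z_\theta\le M_qR^2$, which makes all moments finite, is a small addition the paper leaves implicit and does not change the argument.
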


\begin{proof}[Proof of Lemma~\ref{lem:variance}]
Since $\widehat{\mathcal{L}}_N(\theta;q) = \frac{1}{N}\sum_{i=1}^N \frac{p(x_i)}{q(x_i \mid t_i, \mu_i)}r_\theta(x_i,t_i;\mu_i)^2$ is an average of $N$ i.i.d. random variables, its variance is $\frac{1}{N}$ times the single-sample variance:
\begin{align}
\sigma_q^2(\theta) 
&= \mathrm{Var}_q\left[\frac{p(x)}{q(x \mid t, \mu)}\,r_\theta(x,t;\mu)^2\right] \nonumber \\
&= \mathbb{E}_q\!\left[\left(\frac{p(x)}{q(x \mid t,\mu)}\,r_\theta(x,t;\mu)^2\right)^{\!2}\,\right] - \left(\mathbb{E}_q\!\left[\frac{p(x)}{q(x \mid t,\mu)}\,r_\theta(x,t;\mu)^2\right]\right)^2.
\end{align}

\noindent Expanding the first-moment expectation inside the second term gives:
\begin{align}
\mathbb{E}_q\!\left[\frac{p(x)}{q(x \mid t,\mu)}\,r_\theta(x,t;\mu)^2\right]
&= \int_{\mathcal{P}} \int_0^{t_{\mathrm{end}}} \int_{\Omega_{x}} \frac{p(x)}{q(x \mid t,\mu)}\,r_\theta(x,t;\mu)^2 \cdot \Big(q(x \mid t, \mu) p(t) p(\mu)\Big) \, dx\, dt\, d\mu \nonumber \\
&= \int_{\mathcal{P}} \int_0^{t_{\mathrm{end}}} \int_{\Omega_{x}} p(x)\, p(t)\, p(\mu) \cdot r_\theta(x,t;\mu)^2 \, dx\, dt\, d\mu \nonumber \\
&= \mathbb{E}_{p}\!\left[r_\theta(x,t;\mu)^2\right] = \mathcal{L}(\theta).
\end{align}

\noindent Plugging this back into the variance expression, the second term simplifies to $\mathcal{L}(\theta)^2$, which yields:
\begin{equation}\label{eq:app_lemma1}
\sigma_q^2(\theta) = \mathbb{E}_q\!\left[\left(\frac{p(x)}{q(x \mid t,\mu)}\,r_\theta(x,t;\mu)^2\right)^{\!2}\,\right] - \mathcal{L}(\theta)^2.
\end{equation}
Multiplying by $\frac{1}{N}$ completes the proof of Lemma~\ref{lem:variance}.
\end{proof}

\vspace{4mm}

\begin{proof}[Proof of Proposition~\ref{prop:genbound}]
The proof proceeds in three intuitive steps: establishing a pointwise concentration bound for a fixed $\theta$, uniformizing the bound over the parameter space $\Theta$, and isolating the precise dependence on the proposal distribution $q$.

\paragraph{Step 1: Pointwise Concentration via Bernstein's Inequality.}
Fix any parameter vector $\theta \in \Theta$. For the $i$-th collocation point $(x_i, t_i, \mu_i) \sim q$, define the importance-weighted random variable:
\begin{equation}
Z_i(\theta) := \frac{p(x_i, t_i, \mu_i)}{q(x_i, t_i, \mu_i)} \, r_\theta(x_i, t_i; \mu_i)^2 = \frac{p(x_i)}{q(x_i \mid t_i, \mu_i)} \, r_\theta(x_i, t_i; \mu_i)^2.
\end{equation}
By definition, the Monte Carlo estimator $\widehat{\mathcal{L}}_N(\theta; q) = \frac{1}{N}\sum_{i=1}^N Z_i(\theta)$ is an unbiased estimator of the population loss, i.e., $\mathbb{E}_q[Z_i(\theta)] = \mathcal{L}(\theta)$. 

We characterize two essential moment properties of $Z_i(\theta)$:
\begin{enumerate}[label=(\roman*)]
    \item \textbf{Per-sample Variance:} $\mathrm{Var}_q\big(Z_i(\theta)\big) = \sigma_q^2(\theta)$, as established in Lemma~\ref{lem:variance}.
    \item \textbf{Essential Boundedness:} By Assumptions~\ref{asm:bound} and \ref{asm:supp},
    \begin{equation}
    0 \le Z_i(\theta) \le \mathrm{ess\,sup}_q\left(\frac{p(x)}{q(x \mid t, \mu)}\right) \cdot \sup_{(x,t,\mu)} r_\theta(x, t; \mu)^2 \le M_q R^2 =: U_q.
    \end{equation}
\end{enumerate}
Applying Bernstein's inequality to the i.i.d. sequence $\{Z_i(\theta) - \mathcal{L}(\theta)\}_{i=1}^N$, we obtain for any $\epsilon > 0$:
\begin{equation}\label{eq:pointwise_bernstein}
\mathbb{P}_q \Big( \big|\widehat{\mathcal{L}}_N(\theta; q) - \mathcal{L}(\theta)\big| > \epsilon \Big) \;\le\; 2 \exp \left( - \frac{N \epsilon^2}{2 \sigma_q^2(\theta) + \frac{2}{3} U_q \epsilon} \right).
\end{equation}

\paragraph{Step 2: Uniform Generalization over $\Theta$.}
To extend this concentration uniformly over all $\theta \in \Theta$, we control the capacity of the hypothesis function class $\mathcal{F}_\Theta := \{ (x, t, \mu) \mapsto \frac{p(x)}{q(x \mid t, \mu)} r_\theta(x, t; \mu)^2 : \theta \in \Theta \}$. 

By standard uniform concentration techniques, for any specified confidence parameter $\delta \in (0, 1)$, there exist a function class complexity metric $\mathcal{C}(\Theta, \delta) > 0$ and universal numerical constants $C_1, C_2 > 0$ such that, with probability at least $1-\delta$:
\begin{equation}\label{eq:Bernstein_derived}
\sup_{\theta \in \Theta} \big|\widehat{\mathcal{L}}_N(\theta; q) - \mathcal{L}(\theta)\big| \;\le\; C_1 \sqrt{\frac{\bar{\sigma}_q^2 \, \mathcal{C}(\Theta, \delta)}{N}} \;+\; C_2 \frac{M_q R^2 \, \mathcal{C}(\Theta, \delta)}{N},
\end{equation}
where $\bar{\sigma}_q^2 := \sup_{\theta \in \Theta} \sigma_q^2(\theta)$ represents the worst-case per-sample variance across $\Theta$.

\paragraph{Step 3: Isolating the Dependence on Proposal $q$.}
Note that the complexity term $\mathcal{C}(\Theta, \delta)$ is evaluated under the intrinsic target measure $p$, rendering $\mathcal{C}(\Theta, \delta)$ and the constants $C_1, C_2$ completely independent of $q$. 

The choice of proposal distribution $q$ affects the upper bound strictly through:
\begin{enumerate}
    \item The leading variance term $\bar{\sigma}_q^2$,
    \item The secondary likelihood ratio bound $M_q$.
\end{enumerate}
In practical training regimes where the sample size $N$ satisfies $N \gtrsim \frac{M_q R^2 \mathcal{C}}{\bar{\sigma}_q^2}$, the $\mathcal{O}(1/\sqrt{N})$ term dominates the $\mathcal{O}(1/N)$ term. Thus, minimizing the estimator variance $\bar{\sigma}_q^2$ tightens the leading term of the generalization bound, which completes the proof.
\end{proof}

\subsection{Proof of Theorem~\ref{thm:strict-sep}.}
\label{prf:strict-sep}

We recall from \eqref{eq:app_lemma1} that since $\mathcal{L}(\theta)^2$ does not depend on the proposal choice $q$, minimizing $\sigma_q^2(\theta)$ is equivalent to minimizing the second-moment term $\mathbb{E}_q\!\left[\left(\frac{p(x)}{q(x \mid t,\mu)}\,r_\theta(x,t;\mu)^2\right)^{\!2}\right]$.

\paragraph{Case 1: Conditional Spatial Proposal ($q \in \mathcal{Q}_c$)}
For $q \in \mathcal{Q}_c$, the spatial proposal $q(x \mid t, \mu)$ depends on both time $t$ and parameter $\mu$. Expanding the second moment explicitly:
\begin{align}
\mathbb{E}_q\!\left[\left(\frac{p(x)}{q(x \mid t,\mu)}\,r_\theta(x,t;\mu)^2\right)^{\!2}\,\right]
&= \int_{\mathcal{P}} p(\mu) \int_{0}^{t_{\mathrm{end}}} p(t) \int_{\Omega_{x}} q(x\mid t,\mu) \cdot \frac{p(x)^2}{q(x \mid t,\mu)^2}\,r_\theta(x,t;\mu)^4\, dx\, dt\, d\mu \nonumber \\
&= \mathbb{E}_{t,\mu}\Bigg[\int_{\Omega_{x}} \frac{p(x)^2\,r_\theta(x,t;\mu)^4}{q(x \mid t,\mu)}\, dx\Bigg]. \label{eq:second_moment_c}
\end{align}
Since $\int_{\Omega_{x}} q(x \mid t, \mu) \, dx = 1$ for any $(t, \mu)$, we multiply the integrand by this identity:
\begin{align}
\mathbb{E}_q\!\left[\left(\frac{p(x)}{q(x \mid t,\mu)}\,r_\theta(x,t;\mu)^2\right)^{\!2}\,\right]
&= \mathbb{E}_{t,\mu}\Bigg[\left(\int_{\Omega_{x}} \frac{p(x)^2\,r_\theta(x,t;\mu)^4}{q(x \mid t,\mu)}\, dx\right) \left(\int_{\Omega_{x}} q(x \mid t, \mu) \, dx\right)\Bigg] \nonumber \\
&= \mathbb{E}_{t,\mu}\Bigg[\left(\int_{\Omega_{x}} \left(\frac{p(x)\,r_\theta(x,t;\mu)^2}{\sqrt{q(x \mid t,\mu)}}\right)^2\, dx\right) \left(\int_{\Omega_{x}} \left(\sqrt{q(x \mid t,\mu)}\right)^2\, dx\right)\Bigg].
\end{align}
Applying the Cauchy-Schwarz inequality to the inner spatial integral for each fixed $(t, \mu)$:
\begin{align}
\mathbb{E}_q\!\left[\left(\frac{p(x)}{q(x \mid t,\mu)}\,r_\theta(x,t;\mu)^2\right)^{\!2}\,\right]
&\ge \mathbb{E}_{t,\mu}\Bigg[\left(\int_{\Omega_{x}} \frac{p(x)\,r_\theta(x,t;\mu)^2}{\sqrt{q(x \mid t,\mu)}} \cdot \sqrt{q(x \mid t,\mu)}\, dx\right)^2\Bigg] \nonumber \\
&= \mathbb{E}_{t,\mu}\Bigg[\left(\int_{\Omega_{x}} p(x)\,r_\theta(x,t;\mu)^2\, dx\right)^2\Bigg] \nonumber \\
&= \mathbb{E}_{t,\mu}\Bigg[\Big(\mathbb{E}_{x}[r_\theta(x,t;\mu)^2]\Big)^2\Bigg]. \label{eq:lower_bound_c}
\end{align}
Equality in Cauchy-Schwarz holds if and only if $q(x\mid t,\mu) \propto p(x)r_\theta(x,t;\mu)^2$. Thus, taking the infimum over all $q \in \mathcal{Q}_c$, we get $V_{\mathrm{c}}^\star(\theta) := \inf_{q \in \mathcal{Q}_c} \sigma_q^2(\theta)$:
\begin{align}
V_{\mathrm{c}}^\star(\theta) + \mathcal{L}(\theta)^2 
&= \mathbb{E}_{t,\mu}\Bigg[\Big(\mathbb{E}_{x}[r_\theta(x,t;\mu)^2]\Big)^2\Bigg] \label{eq:Vc_star_def}
\end{align}

\paragraph{Case 2: Unconditional Spatial Proposal ($q \in \mathcal{Q}_{\mathrm{nc}}$)}
For $q \in \mathcal{Q}_{\mathrm{nc}}$, the spatial proposal $q(x \mid t)$ is restricted to be independent of $\mu$. Starting from the second moment in \eqref{eq:second_moment_c}, since $q(x\mid t)$ is independent of $\mu$, we can pass the expectation over $\mu$ inside the spatial integral:
\begin{align}
\mathbb{E}_q\!\left[\left(\frac{p(x)}{q(x \mid t)}\,r_\theta(x,t;\mu)^2\right)^{\!2}\,\right]
&= \mathbb{E}_{t}\Bigg[\int_{\Omega_{x}} \frac{p(x)^2\,\mathbb{E}_{\mu}[r_\theta(x,t;\mu)^4]}{q(x \mid t)}\, dx\Bigg].
\end{align}
Using the identity $\int_{\Omega_{x}} q(x \mid t) \, dx = 1$ for each $t$:
\begin{align}
\mathbb{E}_q\!\left[\left(\frac{p(x)}{q(x \mid t)}\,r_\theta(x,t;\mu)^2\right)^{\!2}\,\right]
&= \mathbb{E}_{t}\Bigg[\left(\int_{\Omega_{x}} \frac{p(x)^2\,\mathbb{E}_{\mu}[r_\theta(x,t;\mu)^4]}{q(x \mid t)}\, dx\right) \left(\int_{\Omega_{x}} q(x \mid t) \, dx\right)\Bigg] \nonumber \\
&= \mathbb{E}_{t}\Bigg[\left(\int_{\Omega_{x}} \left(\frac{p(x)\sqrt{\mathbb{E}_{\mu}[r_\theta(x,t;\mu)^4]}}{\sqrt{q(x \mid t)}}\right)^2 dx\right) \left(\int_{\Omega_{x}} \left(\sqrt{q(x \mid t)}\right)^2 dx\right)\Bigg].
\end{align}
Applying the Cauchy-Schwarz inequality over $\Omega_{x}$ for each $t$:
\begin{align}
\mathbb{E}_q\!\left[\left(\frac{p(x)}{q(x \mid t)}\,r_\theta(x,t;\mu)^2\right)^{\!2}\,\right]
&\ge \mathbb{E}_{t}\Bigg[\left(\int_{\Omega_{x}} p(x)\sqrt{\mathbb{E}_{\mu}[r_\theta(x,t;\mu)^4]}\, dx\right)^2\Bigg] \nonumber \\
&= \mathbb{E}_{t}\Bigg[\left(\mathbb{E}_{x}\Big[\sqrt{\mathbb{E}_{\mu}[r_\theta(x,t;\mu)^4]}\Big]\right)^2\Bigg]. \label{eq:lower_bound_nc}
\end{align}
Equality holds when $q(x\mid t) \propto p(x)\sqrt{\mathbb{E}_{\mu}[r_\theta(x,t;\mu)^4]}$. Hence, taking the infimum over all $q \in \mathcal{Q}_{\mathrm{nc}}$, we obtain $V_{\mathrm{nc}}^\star(\theta) := \inf_{q \in \mathcal{Q}_{\mathrm{nc}}} \sigma_q^2(\theta)$:
\begin{align}
V_{\mathrm{nc}}^\star(\theta) + \mathcal{L}(\theta)^2
&= \mathbb{E}_{t}\Bigg[\left(\mathbb{E}_{x}\Big[\sqrt{\mathbb{E}_{\mu}[r_\theta(x,t;\mu)^4]}\Big]\right)^2\Bigg] \label{eq:Vnc_star_def}
\end{align}

\paragraph{Comparison via Minkowski's Integral Inequality}
To compare $V_{\mathrm{c}}^\star(\theta)$ and $V_{\mathrm{nc}}^\star(\theta)$, we express the inner parameter expectations using the $L^2(\mu)$-norm of the squared residual $r_\theta(x,t;\mu)^2$:
\begin{equation}
\Big\| r_\theta(x,t;\mu)^2 \Big\|_{L^2(\mu)} := \sqrt{\mathbb{E}_{\mu}\left[r_\theta(x,t;\mu)^4\right]}.
\end{equation}

\noindent Rewriting \eqref{eq:Vc_star_def} and \eqref{eq:Vnc_star_def} side by side:
\begin{equation}\label{eq:sidebyside}
V_{\mathrm{c}}^\star(\theta) + \mathcal{L}(\theta)^2 
= \mathbb{E}_t\Bigg[\;\left\|\,\mathbb{E}_{x}\!\left[r_\theta(x,t;\mu)^2\right]\right\|_{L^2(\mu)}^2\;\Bigg],
\qquad
V_{\mathrm{nc}}^\star(\theta) + \mathcal{L}(\theta)^2 
= \mathbb{E}_t\Bigg[\;\left(\mathbb{E}_{x}\!\left[\,\left\|r_\theta(x,t;\mu)^2\right\|_{L^2(\mu)}\right]\right)^2\;\Bigg].
\end{equation}

\noindent The only difference between $V_{\mathrm{c}}^\star$ and $V_{\mathrm{nc}}^\star$ is the order of the spatial expectation $\mathbb{E}_{x}$ and the $L^2(\mu)$-norm: taking the spatial average first and then measuring the parameter norm, versus measuring the parameter norm first and then taking the spatial average.

\noindent By Minkowski's integral inequality:
\begin{equation}\label{eq:minkowski}
\left\|\,\mathbb{E}_{x}\!\left[r_\theta(x,t;\mu)^2\right]\right\|_{L^2(\mu)} 
\;\le\; 
\mathbb{E}_{x}\!\left[\,\left\|r_\theta(x,t;\mu)^2\right\|_{L^2(\mu)}\right].
\end{equation}

\noindent Squaring both sides of \eqref{eq:minkowski} and taking the expectation w.r.t time $t$ yields:
\begin{equation}
\mathbb{E}_t\Bigg[\;\left\|\,\mathbb{E}_{x}\!\left[r_\theta(x,t;\mu)^2\right]\right\|_{L^2(\mu)}^2\;\Bigg] 
\;\le\; 
\mathbb{E}_t\Bigg[\;\left(\mathbb{E}_{x}\!\left[\,\left\|r_\theta(x,t;\mu)^2\right\|_{L^2(\mu)}\right]\right)^2\;\Bigg].
\end{equation}

\noindent Subtracting $\mathcal{L}(\theta)^2$ from both sides gives the desired inequality:
\begin{equation}
V_{\mathrm{c}}^\star(\theta) \;\le\; V_{\mathrm{nc}}^\star(\theta) \qquad \text{for every } \theta \in \Theta.
\end{equation}
\qed

\subsection{Proof of Remark~\ref{rem:strict-sep}.}
\label{prf:rem-strict-sep}

We first recall the equality condition for the continuous Minkowski inequality \eqref{eq:minkowski}. In general, equality in the vector triangle inequality $\|f+g\| = \|f\| + \|g\|$ holds if and only if $f$ and $g$ are linearly dependent with non-negative scalars (i.e., they point in the same direction). 

Extending this to the continuous Minkowski inequality in the $L^2(\mu)$-norm over the parameter space $\mathcal{P}$, equality holds in \eqref{eq:minkowski} at a fixed time $t$ if and only if all spatial functions in the family $\{ x \mapsto r_\theta(x, t; \mu)^2 \}_{\mu \in \mathcal{P}}$ are non-negative scalar multiples of one single spatial function $a_t(x)$. That is, for almost every $t$, there must exist a spatial factor $a_t(x) \ge 0$ and a $\mu$-dependent factor $c_t(\mu) \ge 0$ such that the squared residual factors into a product:
\begin{equation}\label{eq:separable}
r_\theta(x,t;\mu)^2 \;=\; a_t(x)\, c_t(\mu).
\end{equation}

To see how condition \eqref{eq:separable} relates to the spatial shape of the residual, consider its normalized spatial distribution:
\begin{equation}\label{eq:shape_proof}
\varrho_\theta(x \mid t, \mu) 
\;:=\; \frac{p(x)\, r_\theta(x,t;\mu)^2}{\int_{\Omega_{x}} p(x')\, r_\theta(x',t;\mu)^2\, dx'}.
\end{equation}
Substituting the factorized form \eqref{eq:separable} into both the numerator and the denominator of \eqref{eq:shape_proof} yields:
\begin{align}
\varrho_\theta(x \mid t, \mu) 
&\;=\; \frac{p(x)\, \big(a_t(x)\, c_t(\mu)\big)}{\int_{\Omega_{x}} p(x')\, \big(a_t(x')\, c_t(\mu)\big)\, dx'} \\[6pt]
&\;=\; \frac{c_t(\mu) \cdot \big(p(x)\, a_t(x)\big)}{c_t(\mu) \cdot \int_{\Omega_{x}} p(x')\, a_t(x')\, dx'} \\[6pt]
&\;=\; \frac{p(x)\, a_t(x)}{\int_{\Omega_{x}} p(x')\, a_t(x')\, dx'}.
\end{align}
Since the $\mu$-dependent scalar $c_t(\mu)$ cancels out entirely, the resulting density depends solely on $x$ and $t$, making it completely independent of $\mu$. 

Conversely, if $\varrho_\theta(x \mid t, \mu)$ is independent of $\mu$, we can write $\varrho_\theta(x \mid t, \mu) = \varrho_\theta(x \mid t)$. Rearranging definition \eqref{eq:shape_proof} then gives:
\begin{equation}
r_\theta(x,t;\mu)^2 \;=\; \frac{\varrho_\theta(x \mid t)}{p(x)} \cdot \left( \int_{\Omega_{x}} p(x')\, r_\theta(x',t;\mu)^2\, dx' \right),
\end{equation}
which is precisely in the separable form $a_t(x)\, c_t(\mu)$ with $a_t(x) = \varrho_\theta(x \mid t)/p(x)$ and $c_t(\mu) = \int_{\Omega_{x}} p(x')\, r_\theta(x',t;\mu)^2\, dx'$.

Thus, equality holds in Theorem~\ref{thm:strict-sep} if and only if the normalized spatial shape $\varrho_\theta(x \mid t, \mu)$ carries no actual $\mu$-dependence.

\medskip
\noindent This mathematical equivalence highlights a critical physical insight: it is \emph{not} sufficient for the residual magnitude to vary with $\mu$. Any overall magnitude variation is entirely absorbed by $c_t(\mu)$, which an optimally tuned $\mu$-independent proposal can handle without performance degradation. For strict separation to fail (and equality to hold), the residual must maintain a fixed spatial shape across all $\mu$.

\begin{assumption}[Non-degeneracy]\label{asm:nondeg}
For every $\theta \in \Theta$, the normalized spatial shape $\varrho_\theta(\cdot \mid t, \mu)$ is not $p(\mu)$-a.e.\ constant in $\mu$ on a set of times $t$ with positive $p(t)$-measure.
\end{assumption}

\noindent Under Assumption~\ref{asm:nondeg}, the condition for equality \eqref{eq:separable} is violated on a set of times of positive measure. Consequently, the Minkowski inequality \eqref{eq:minkowski} becomes strict, and taking the expectation over $t$ preserves this strict inequality:
\begin{equation}
V_{\mathrm{c}}^\star(\theta) \;<\; V_{\mathrm{nc}}^\star(\theta) \qquad \text{for every } \theta \in \Theta.
\end{equation}

\medskip
\noindent In metal AM thermal modeling, Assumption~\ref{asm:nondeg} naturally holds due to the underlying physical dynamics of the system. Variations in process conditions $\mu$ directly change the governing PDEs and BCs. As a result, the entire \emph{residual landscape} changes its spatial structure rather than just scaling up or down: localized high-residual regions shift or reshape across the domain as $\mu$ varies. Since a spatial profile cannot be expressed as a simple scalar multiple of a shifted or reshaped profile, the normalized shape \eqref{eq:shape_proof} inherently varies with $\mu$. This structural non-separability holds true regardless of the neural network's architecture or approximation quality, as the $\mu$-dependent spatial variation is directly imprinted on the residual landscape by the physical problem itself. This completes the proof of Remark~\ref{rem:strict-sep}.
\qed

\subsection{Qualitative temperature fields comparison}\label{app:qualit}

\begin{figure}[htbp]
  \centering
  \includegraphics[width=\linewidth]{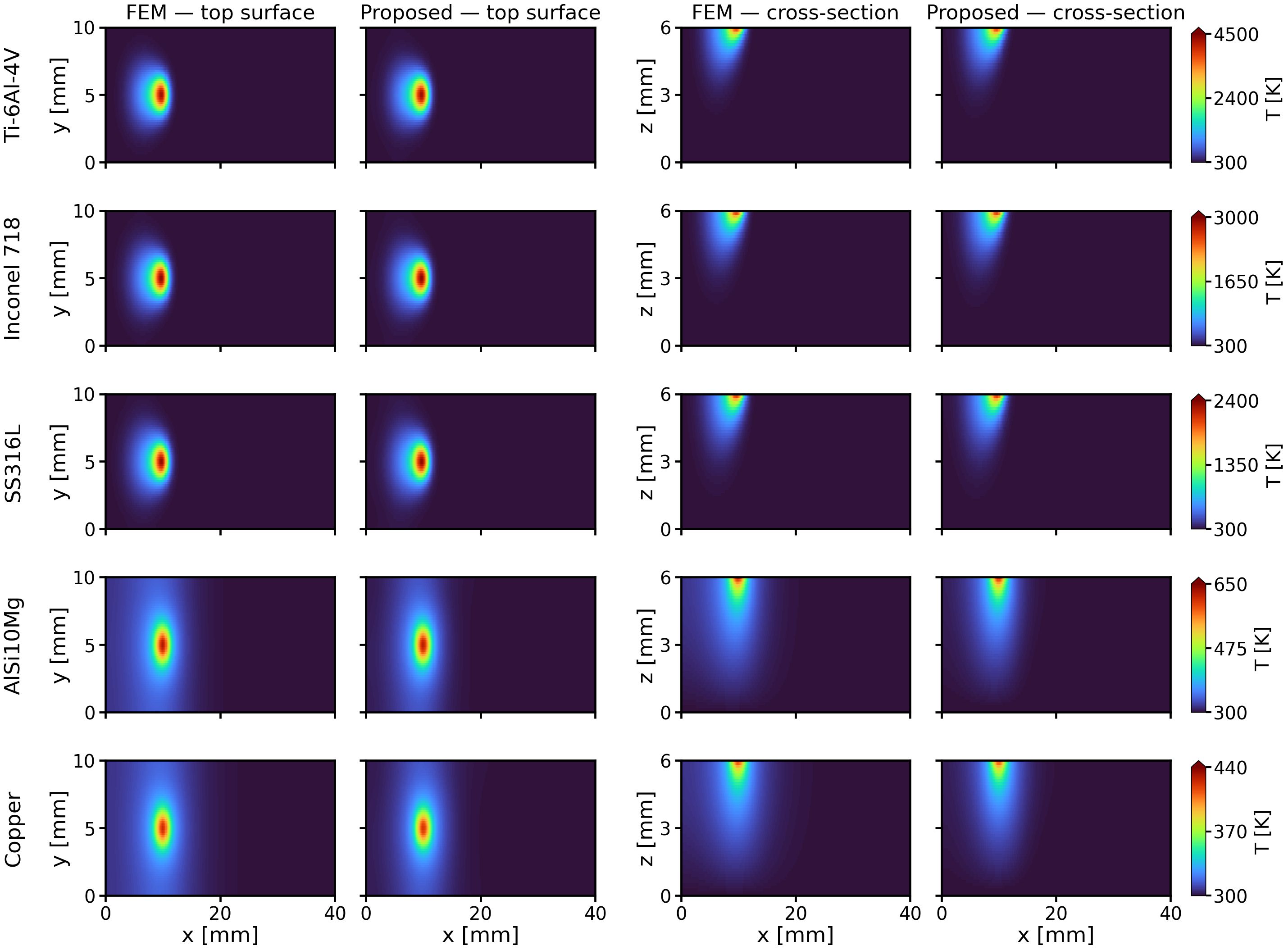}
  \caption{Temperature field at $t = 0.5$~s for the five validation materials (rows): FEM reference and the proposed method, on the top surface and on the $y = 5$~mm cross-section. Note that each material uses its own color scale due to differences in thermophysical properties.}
  \label{fig:tfield_t05}
\end{figure}

\begin{figure}[htbp]
  \centering
  \includegraphics[width=\linewidth]{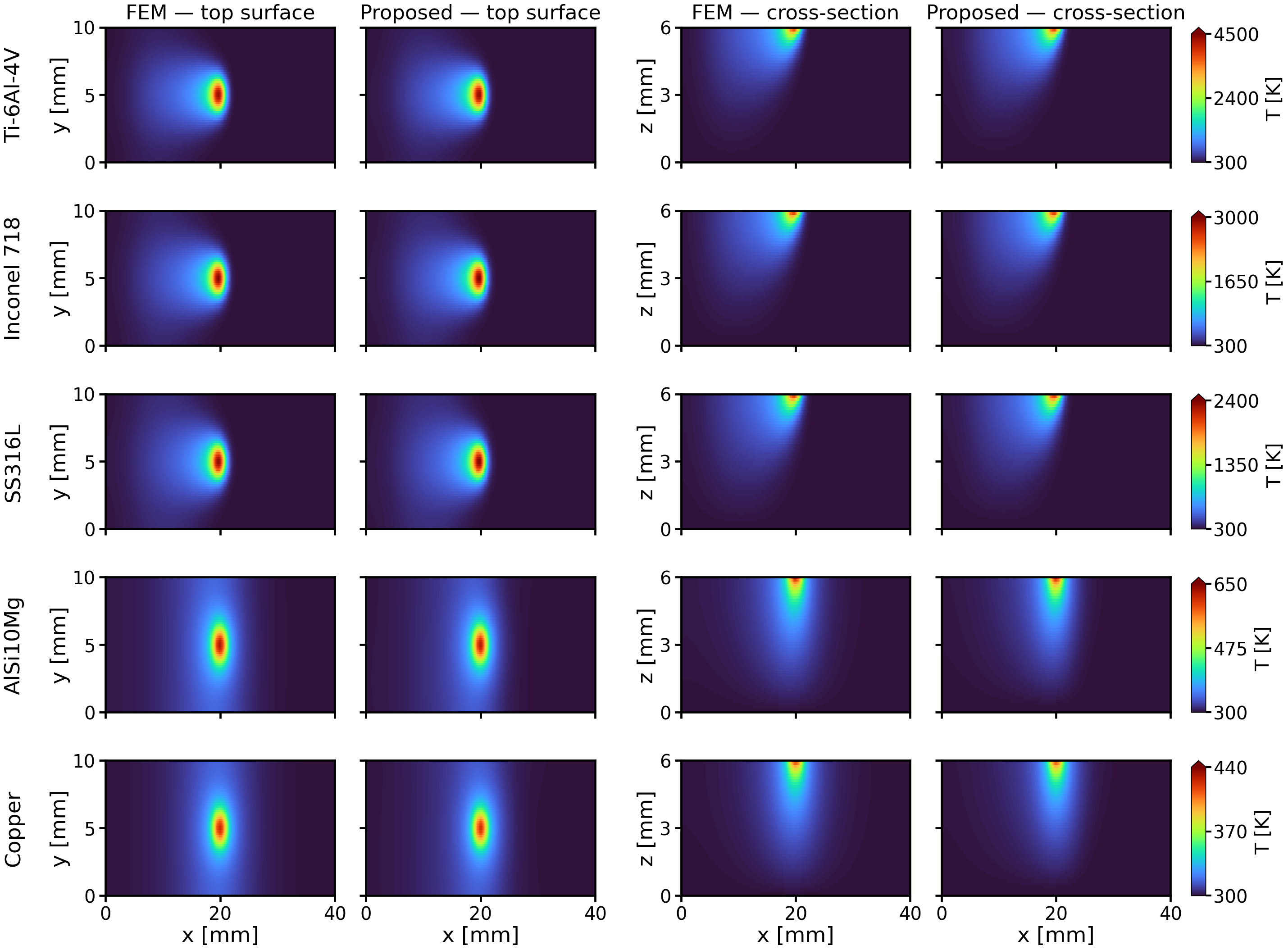}
  \caption{Temperature field at $t = 1.5$~s (same layout and color scales as Figure~\ref{fig:tfield_t05}). The laser has advanced to $x = 20$~mm and the trailing heat tail has developed.}
  \label{fig:tfield_t15}
\end{figure}

\begin{figure}[htbp]
  \centering
  \includegraphics[width=\linewidth]{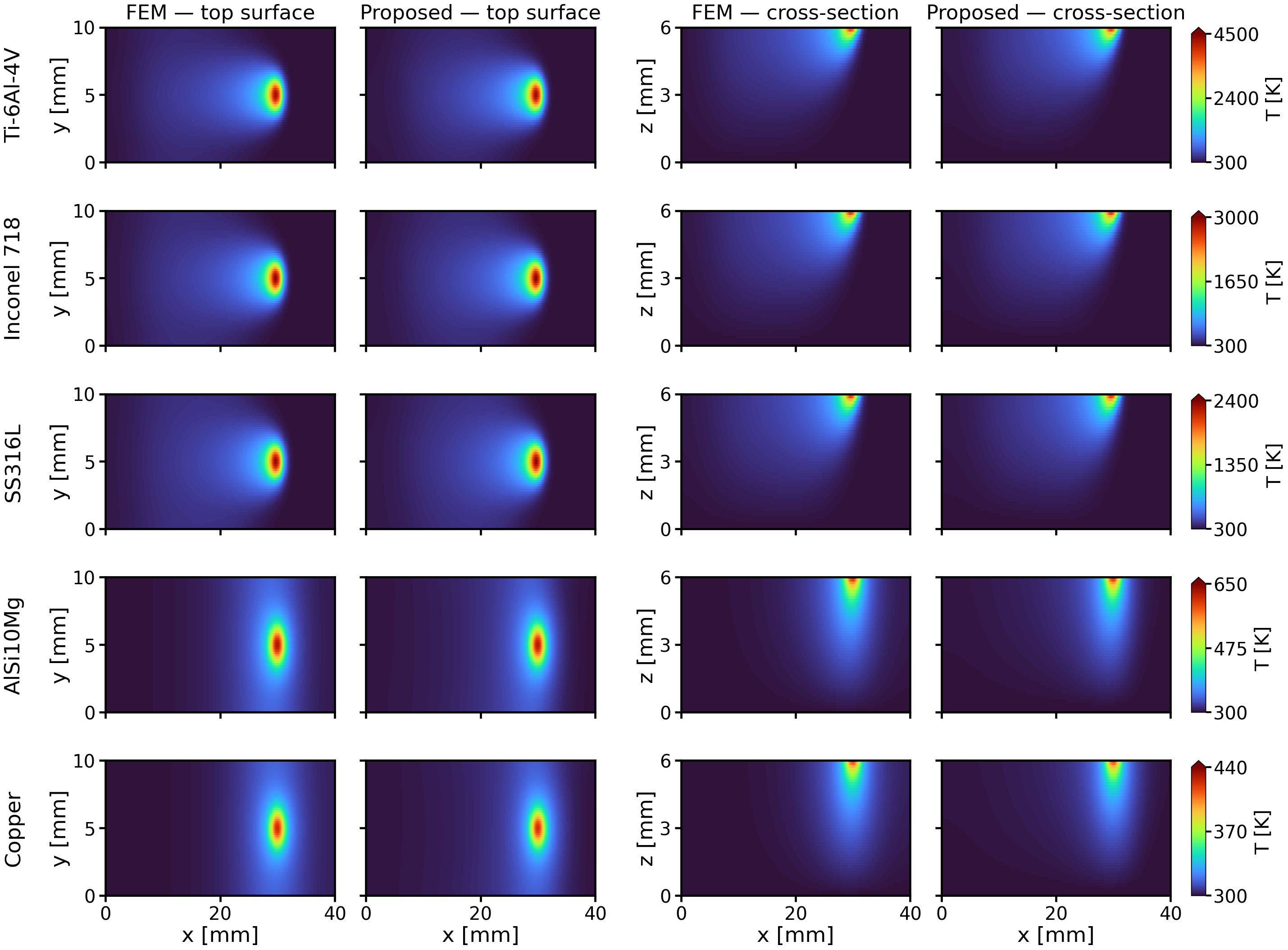}
  \caption{Temperature field at $t = 2.5$~s (same layout and color scales as Figure~\ref{fig:tfield_t05}). The proposed method reproduces the melt pool, the trailing heat tail, and the depth of heat penetration for every material, including the out-of-distribution AlSi10Mg and Copper.}
  \label{fig:tfield_t25}
\end{figure}

\subsection{Temperature fields deviation from FEM}\label{app:dev}

\begin{figure}[htbp]
\centering
\includegraphics[width=\textwidth]{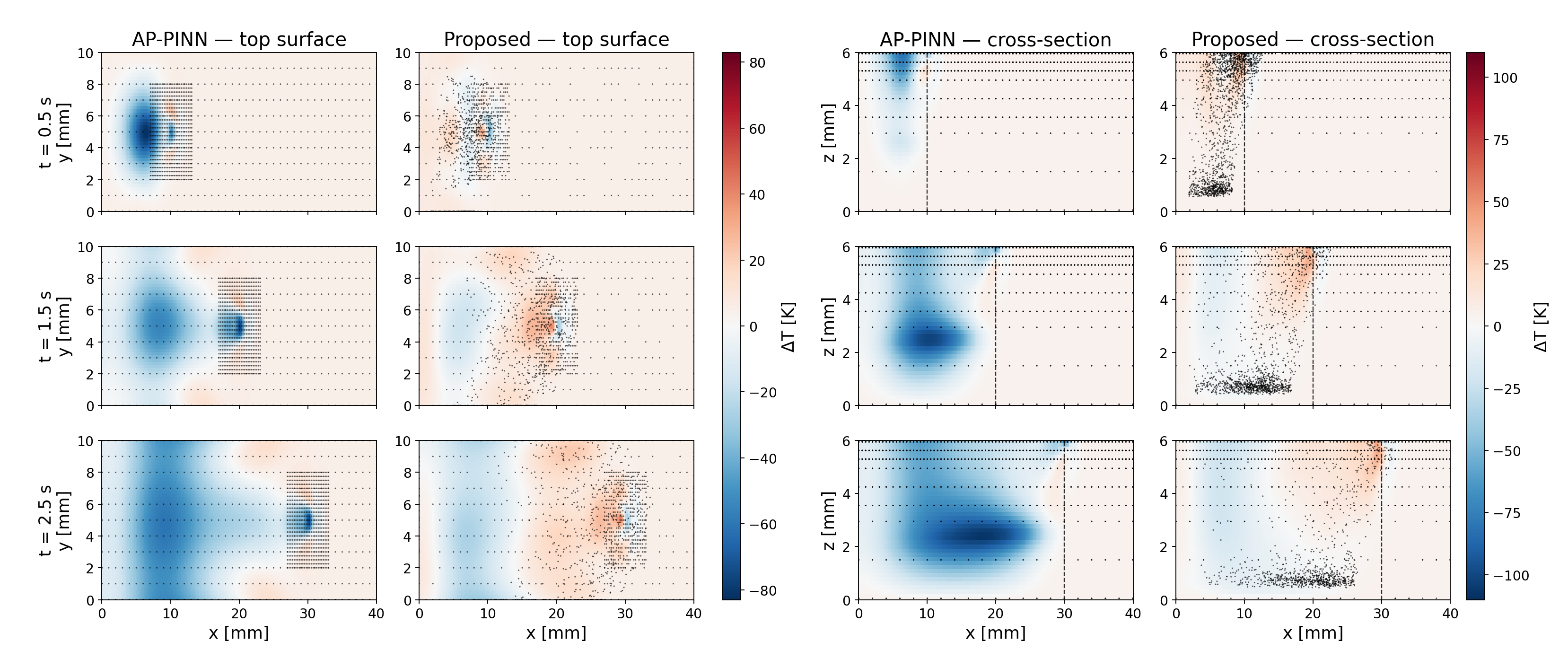}
\caption{Signed deviation and collocation overlay for Ti-6Al-4V
($k_c = 6.7$~W\,m$^{-1}$\,K$^{-1}$), plotted as in Fig.~\ref{fig:dev-main}.}
\label{fig:dev-ti64}
\end{figure}

\begin{figure}[htbp]
\centering
\includegraphics[width=\textwidth]{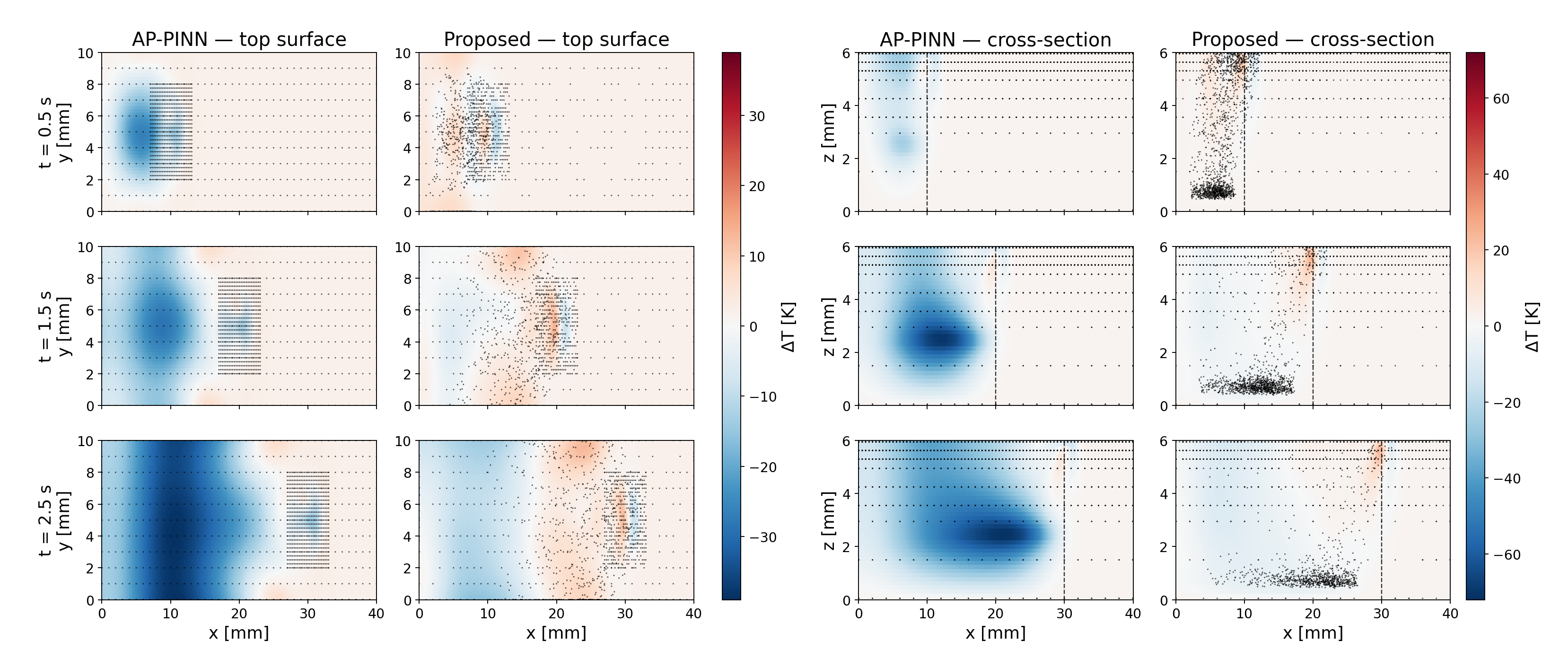}
\caption{Signed deviation and collocation overlay for SS316L
($k_c = 16$~W\,m$^{-1}$\,K$^{-1}$), plotted as in Fig.~\ref{fig:dev-main}.}
\label{fig:dev-ss316l}
\end{figure}

\begin{figure}[htbp]
\centering
\includegraphics[width=\textwidth]{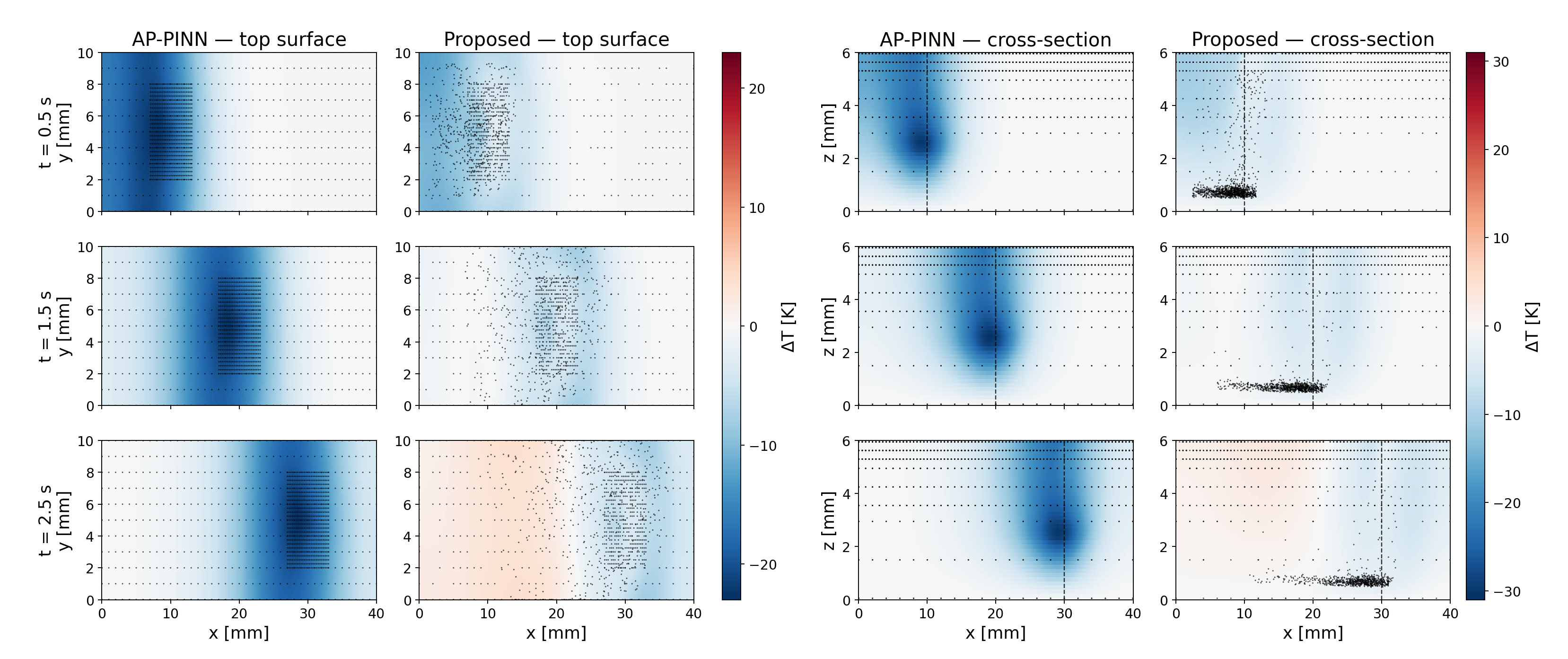}
\caption{Signed deviation and collocation overlay for AlSi10Mg
($k_c = 150$~W\,m$^{-1}$\,K$^{-1}$), plotted as in Fig.~\ref{fig:dev-main}.}
\label{fig:dev-alsi}
\end{figure}

\section*{Acknowledgements}
This work was supported by the National Research Foundation of Korea (NRF) grant funded by the Korea government (MSIT) (RS-2025-02216260, RS-2024-00431837).

\section*{Data availability statement}
The data and code in this study are available at \url{https://github.com/hsleecri/AdaptiveSamplingPINN-AM}.

\section*{Generative Artificial Intelligence (AI)}
The authors used Claude Sonnet 5 by Anthropic for grammar checking, polishing, and improving readability. After using this tool, the authors reviewed and edited the content as needed, and acknowledge that the authors take full responsibility for the content of the publication.
\bibliographystyle{chicago}
\bibliography{IISE-Trans}
\end{document}